\theoremstyle{plain}
\newtheorem{theorem}{Theorem}[section]
\theoremstyle{definition}
\newtheorem{definition}[theorem]{Definition}
\theoremstyle{remark}
\begin{document}

\title{CDC: A Simple Framework for Complex Data Clustering\thanks{This work was supported by the National Natural
Science Foundation of China Grant 62276053. Corresponding author: Erlin Pan\\Z. Kang, X. Xie, B. Li, E. Pan are with the School of Computer Science and Engineering, University of Electronic Science and Technology of China, Chengdu, China (e-mail: zkang@uestc.edu.cn; x624361380@outlook.com; {bingheng86, wujisixsix6}@gmail.com).}
}
\author{Zhao Kang, ~\IEEEmembership{Member,~IEEE,} Xuanting Xie, Bingheng Li and Erlin Pan}
%\author{IEEE Publication Technology,~\IEEEmembership{Staff,~IEEE,}}
        % <-this % stops a space
%\thanks{This paper was produced by the IEEE Publication Technology Group. They are in Piscataway, NJ.}% <-this % stops a space
%\thanks{Manuscript received April 19, 2021; revised August 16, 2021.}}

% The paper headers
\markboth{Journal of \LaTeX\ Class Files,~Vol.~14, No.~8, August~2021}%
{Shell \MakeLowercase{\textit{et al.}}: A Sample Article Using IEEEtran.cls for IEEE Journals}

%\IEEEpubid{0000--0000/00\$00.00~\copyright~2021 IEEE}
% Remember, if you use this you must call \IEEEpubidadjcol in the second
% column for its text to clear the IEEEpubid mark.

\maketitle

\begin{abstract}
In today's digital era driven by data, the amount and
complexity of the collected data, such as multi-view, non-Euclidean, and multi-relational,  are growing exponentially or even faster. Clustering, which unsupervisedly extracts valid knowledge from data, is extremely useful in practice. However, existing methods are independently developed to handle one particular challenge at the expense of the others. In this work, we propose a simple but effective framework for complex data clustering (CDC) that can efficiently process different types of data with linear complexity. We first utilize graph filtering to fuse geometric structure and attribute information. We then reduce complexity with high-quality anchors that are adaptively learned via a novel similarity-preserving regularizer. We illustrate the cluster-ability of our proposed method theoretically and experimentally. In particular, we deploy CDC to graph data of size 111M. 

%Clustering has received widespread attention to effectively and efficiently classify data into different in unsupervised manner. However, most existing approaches  are not capable of handling complex data, such as multi-view, large-scale, or graph data, resulting from their being partial to certain data and high complexity. 
\end{abstract}

\begin{IEEEkeywords}
Anchor graph, scalability, large-scale data, topology structure, multiview learning
\end{IEEEkeywords}

\section{Introduction}
\lettrine{C}lustering is a fundamental technique for unsupervised learning that groups data points into different clusters without labels. It is driven by diverse applications in scientific research and industrial development, which induce complex data types \cite{lin2021multi,lv2021pseudo}, such as multi-view, non-Euclidean, and multi-relational. %Specifically, %Recent research has yielded some promising results \cite{second}, but most methods would not produce desirable results due to the complexity of real-world datasets. \\
In many real-world applications, data are often gathered from multiple sources or different extractors and therefore exhibit different features, dubbed as multi-view data \cite{MVL}. Despite the fact that each view may be noisy and incomplete, important factors, such as geometry and semantics, tend to be shared across all views. Different views also provide complementary information, so it is paramount for multi-view clustering (MVC) methods to integrate diverse features. For example, \cite{MCGC2} learns a consensus graph with a rank constraint on the corresponding Laplacian matrix from multiple views for clustering. \cite{DMCC} employs intra-view collaborative learning to harvest complementary and consistent information among different views.

Along with the development of sophisticated data collection and storage techniques, the size of data increases explosively. To handle large-scale data efficiently, several MVC methods with linear complexity have been proposed. \cite{LMVSC} learns an anchor graph for each view and concatenates them for multi-view subspace clustering. \cite{SFMC}, \cite{EOMSC-CA}, and \cite{SMVSC} construct bipartite graphs or learn representations based on anchors. \cite{BMVC} effectively integrates the collaborative information from multiple views via learning discrete representations and binary cluster structures jointly. Despite these advances, they often produce unstable performance on different datasets because of the randomness in anchor selection.

Recently, non-Euclidean graph data have become pervasive since they contain not only node attributes but also topological structure information, which characterizes relations among data points \cite{kang2019robust,pan2023beyond}. Social network users, for example, have their own profiles and social relationships reflected in the topological graph. Traditional clustering methods exploit either attributes or graph structures and cannot achieve the best performance \cite{MCGC}. Graph Neural Network (GNN) is a powerful tool to simultaneously explore node attributes and structures \cite{GCN}. Based on this, several graph clustering methods have been designed \cite{AGC, DAEGC,shen2024beyond}.  In some applications, the graph could exhibit multi-view attributes or be multi-relational. To cluster multi-view graphs, \cite{MAGCN} learns a representation for each view and forces them to be close. To handle multi-relational graphs, \cite{O2MAC} finds the most informative graph to recover multiple graphs.

 Despite the remarkable success of GNN-based methods in graph clustering, there is still one crucial question, i.e., scalability, which   
 prevents their deployment to web-scale graph data. For example, ogbn-papers100M contains more than 100M nodes, which cannot be processed by most graph clustering methods. Although \cite{S3GC, liu2023dink} have made advances in scalable graph clustering by applying a light-weight encoder and contrastive learning, their performance highly depends on graph augmentation. Therefore, scalability for graph clustering is still under-explored, and more dedicated efforts are pressingly needed.
 
We can see that some specialized methods have been developed to address one of the above problems but there lacks a unified model for complex data clustering that generalizes well while still being scalable. To fill this gap, we propose a simple yet effective framework for Complex Data Clustering (CDC). We first use graph filtering to fuse raw features and topology information, which produces cluster-ability representations and provides a flexible way to handle different types of data. Rather than constructing a complete graph from all data points, the CDC learns anchor graphs, resulting in linear computation complexity. In particular, we generate anchors adaptively with a similarity-preserving regularizer to alleviate the randomness from anchor selection. Thus, the framework reduces complexity using high-quality anchors that are adaptively learned through a novel  similarity-preserving regularizer. By theoretically and experimentally illustrating the clustering capacity of CDC, we demonstrate its effectiveness in handling diverse datasets. To summarize, we make the following contributions:

\begin{itemize}
    \item We propose a simple clustering framework for complex data, e.g., single-view and multi-view, graph and non-graph, small-scale and large-scale data. Our method has linear complexity in time and space.
    \item (Section \ref{methodology}) 
    We are the first to propose a similarity-preserving regularizer to automatically learn high-quality anchors from data.% Anchor generation has constant complexity, which is superior to other anchor selection strategy.
    \item (Section \ref{the}) We present the first theoretical analysis about the effect of graph filtering and the property of the learned anchor graph. Filtered representations preserve both topology and attribute similarity, and the learned graph has a grouping effect. 
    \item (Section \ref{exps}) CDC achieves impressive performance in 14 complex datasets. Most notably, it scales beyond the graph with more than 111M nodes.
\end{itemize}

\section{Related Work} \label{related}
    \subsection{Multi-view Clustering} \label{MVC}
    MVC methods generally focus on enhancing performance by utilizing global consensus and complementary information among multiple views. \cite{CoMVC} mines view-shared information by a sample-level contrastive module to align angles between representations.  \cite{xu2020modal} incorporates modal regression  to be robust to noise. \cite{FMR} uses the Hilbert-Schmidt Independence Criterion (HSIC) to explore the underlying cluster structure shared by multiple views. \cite{MMVC} generates an automatic partitioning with data from multiple views via a multi-objective clustering framework. \cite{COMIC} achieves cross-view consensus by projecting data points into a space where geometric and cluster assignments are consistent.\\
    % \cite{MCLES} jointly learns the global structure and consensus cluster indicator matrix from multi-view data.
    Different from shallow methods, deep MVC methods learn good representations via designed neural networks. \cite{CMGEC} applies an attention encoder and multi-view mutual information maximization to capture the complementary information, consistency information, and internal relations of each view. Recently, some methods have combined the contrastive learning mechanism to obtain clustering-favorable representations. For example, \cite{DCP} performs contrastive learning at the instance and category levels to improve the consistency of the cross-view. %\cite{CMHHC} employes contrastive multi-view alignment learning to capture the aligned and discriminative representations across multiple views. 
    However, these methods are not scalable to large-scale data. To reduce complexity, \cite{LMVSC, MSGL} construct bipartite graphs between the cluster centroids of $K$-means and raw data points, where the anchors are chosen randomly and fixed for subsequent learning. \cite{FastMICE} leverages features, anchors, and neighbors together to construct bipartite graphs. \cite{FIMVC} captures the view-specific and consistent
    information by constructing a consensus graph from view-independent anchors. Although they all have linear complexity, their performance could be sub-optimal since the pre-defined anchors are not updated according to the downstream task.  
    % \textcolor{black}{add some anchor learning approaches from xinwang liu} 
    In contrast, we generate high-quality anchors adaptively, which is efficient and stable on complex data.
    
    \subsection{Graph Clustering} \label{GC}
    Graph clustering methods group nodes based on their node attributes and topological structure. Some representation learning methods, such as Node2vec \cite{Node2vec} and GAE \cite{GAE}, can be used to learn embeddings for traditional clustering techniques. However, the obtained embeddings might not be suitable for clustering as they are not specifically designed to learning representations of cluster-ability. MVGRL \cite{MVGRL}, BGRL \cite{BGRL}, and GRACE \cite{GRACE} obtain a classification-favorable representation through contrastive graph learning, but are not applicable to large-scale graph due to computational cost associated with data augmentation. Although MCGC \cite{MCGC} is augmentation-free by considering k-nearest neighbors as positive pairs, the algorithm has quadratic complexity. Other deep graph clustering methods, such as DFCN \cite{DFCN}, \textcolor{black}{WEC-FCA \cite{bian2024weighted}},  and DCRN \cite{DCRN}, achieve promising performance by training MLP and GNNs jointly in small-scale / medium-scale graph. \textcolor{black}{DCMAC \cite{yang2024discrete}} and MvAGC \cite{MvAGC} have low complexity, but are not efficient due to their anchor sampling strategy. Hence, these graph clustering methods cannot effectively and efficiently handle large-scale graph. Though S$^3$GC \cite{S3GC} obtains promising results in the extra large-scale graph through the random walk-based sampler and the light-weight encoder, there is a high computation cost for training. Our method can handle graph clustering in linear time with promising performance. 
     Table \ref{add1} summarizes the main characteristics of some representative methods and shows how they fill the gap.

\begin{table*}[htbp]
  \centering
  \small
  \caption{The characteristics of some representative methods. “NN-based” indicates that the method is based on deep neural networks. “Comp-info” indicates that the method considers complementary information.}
\begin{tabular}{|c|c|c|c|c|c|c|c|c|}
\hline Methods & Non-graph & Multi-relation & Single-view & Multi-attribute & Unsup & Self-sup & NN-based & Comp-info\\% & Pri-info & Comp-noise \\
\hline METIS & &  &$\checkmark$ & & $\checkmark$ & & & \\
\hline Node2vec & & &$\checkmark$ & & $\checkmark$ & & & \\
\hline DGI & & & $\checkmark$ & & & $\checkmark$ & $\checkmark$ &  \\
\hline DMoN & & & $\checkmark$ & & & $\checkmark$ & $\checkmark$ & \\
\hline GRACE & & & $\checkmark$ & & & $\checkmark$ & $\checkmark$ &  \\
\hline BGRL & & & $\checkmark$ & & & $\checkmark$ & $\checkmark$ &  \\
\hline MVGRL & & & $\checkmark$ &  & & $\checkmark$ & $\checkmark$ &  \\
\hline S$^3$GC & & & $\checkmark$ & & & $\checkmark$ & $\checkmark$ &  \\
\hline CCGC & & & $\checkmark$ & & & $\checkmark$ & $\checkmark$ &  \\
\hline DAEGC & & & $\checkmark$ & &$\checkmark$ &  & $\checkmark$ & \\
\hline O2MAC & & $\checkmark$ & & &$\checkmark$ &  & $\checkmark$ &$\checkmark$ \\
\hline HDMI & & $\checkmark$ & & & & $\checkmark$ & $\checkmark$ & $\checkmark$   \\
\hline CMGEC & & $\checkmark$ & & & & $\checkmark$ & $\checkmark$ & $\checkmark$ \\
\hline COMPLETER &$\checkmark$ & & &  $\checkmark$ & & $\checkmark$ & $\checkmark$ & $\checkmark$  \\
\hline MvAGC & & $\checkmark$ & & $\checkmark$ &$\checkmark$ &  &  & $\checkmark$  \\
\hline MCGC & & $\checkmark$ & & $\checkmark$ & & $\checkmark$ & & $\checkmark$ \\
\hline MAGCN & & & & $\checkmark$ &$\checkmark$  & & $\checkmark$ & $\checkmark$ \\
\hline DMG & & $\checkmark$ & & & &$\checkmark$ &$\checkmark$ &$\checkmark$  \\
\hline BTGF & & $\checkmark$ & & & & $\checkmark$ & $\checkmark$ & $\checkmark$  \\
\hline BMVC & $\checkmark$ & & & $\checkmark$ &$\checkmark$ & &  & $\checkmark$  \\
\hline LMVSC & $\checkmark$ & & &$\checkmark$  &$\checkmark$ & &  & $\checkmark$  \\
\hline MSGL &$\checkmark$ & & & $\checkmark$ &$\checkmark$ & &  & $\checkmark$  \\
\hline FPMVS &$\checkmark$ &  & & $\checkmark$ &$\checkmark$ & &  & $\checkmark$  \\
\hline EOMSC-CA &$\checkmark$ & & &$\checkmark$  &$\checkmark$ & &  & $\checkmark$  \\
\hline FastMICE &$\checkmark$ & & &$\checkmark$  &$\checkmark$ & &  & $\checkmark$  \\
\hline CDC (Ours) &$\checkmark$ & $\checkmark$ &$\checkmark$ &$\checkmark$ &$\checkmark$ &  &  & $\checkmark$  \\
\hline
\end{tabular}
\label{add1}
\end{table*}

\section{Methodology} \label{methodology}
\paragraph{Notation}
Define the generic data as $\mathcal{G}=\lbrace \mathcal{V},E_1,...,E_{V_1},X^1,...,X^{V_2}\rbrace$, where $\mathcal{V}$ represents the set of $N$ nodes, $e_{ij}\in E_{v}$ denotes the relationship between node $ i $ and node $ j $ in the $v$-th view. $V_1\geq 0$ and $V_2> 0$ are the number of relational graphs and attributes, and the data is non-graph when initial $V_1= 0$. $X^v=\lbrace x_1^v,...,x_N^v\rbrace^{\top}\in \mathbb{R}^{N\times d_v}$ is the feature matrix, $d_v$ is the dimension of features. Adjacency matrices $ \lbrace \widetilde{A}^v \rbrace ^{V_1}_{v=1}\in \mathbb{R}^{N\times N} $  characterize the initial graph structure. For non-graph data, we construct adjacency matrices in each view via the 5-nearest neighbor method. There are $V$ views after graph filtering for each dataset, where $V=V_1\times V_2$ for graph data and $V=V_1=V_2$ for non-graph data. $ \lbrace D^v \rbrace ^{V_1}_{v =1} $ represent the degree matrices in various views. The normalized adjacency matrix is $  A^v  = (D^v)^{-\frac{1}{2}}(\widetilde{A}^v + I)(D^v)^{-\frac{1}{2}} $ and the corresponding graph Laplacian is $ L^v = I - A^v $.

\subsection{Graph Filtering}
\label{proofs}
	Filtered features are more clustering-favorable \cite{GFG}, and we apply graph filtering to remove undesirable high-frequency noise while preserving the graph's geometric features. Similarly to \cite{MCGC}, smoothed $H$ is obtained by solving the following optimization problem:
	\begin{equation} \label{smooth}
		\min _{H}\|H-X\|_{F}^{2}+\frac{1}{2} \operatorname{Tr}\left({\mathrm{H}}^\top \mathrm{LH}\right).
	\end{equation}
	 We keep the first-order Taylor series of $H$ from Eq. (\ref{smooth}) and apply $k$-order filtering, which yields:
	\begin{equation}
		H = (I-\frac{1}{2} L)^k X,
	\end{equation} 
	where $k$ is a non-negative integer and it controls the depth of feature aggregation and smoothness of representation. In addition to learning smooth features, graph filtering is also used to unify different types of data into our framework.  
 
\subsection{Anchor Graph Learning}
%\indent \\
We use the idea of data self-expression to capture the relations among data points, i.e., each sample can be represented as a linear combination of other data points. The combination coefficient matrix can be regarded as a reconstructed graph \cite{MCGC}. To reduce the computation complexity, $m$ representative samples $B\in \mathbb{R}^{m\times d}$, called \textbf{anchors}, are selected to construct anchor graph $Z \in \mathbb{R}^{m\times N}$ \cite{LMVSC}. However, the performance of this approach is unstable since it introduces anchors in a probabilistic way. Moreover, once the anchors are chosen, they won't be updated, which could lead to sub-optimal performance. 
To eliminate uncertainty in anchor selection, we propose learning anchors from data, i.e., anchors $B$ are generated adaptively. To guarantee the quality of anchors, we enforce that the similarity between $B$ and $H$ is preserved, i.e.,  $BH^{\top}=Z$. Then we formalize the graph learning problem as:
\begin{equation}
    \min _{Z, B}\left\|H^{\top}-B^{\top} Z\right\|_F^2+\beta\left\|Z\right\|_F^2 \text {, s.t. } B H^{\top}=Z,
\end{equation}
where  $\beta$ is a balance parameter.
To make it easy to solve, we relax the above problem to:
\begin{equation} \label{sobj}
    \min _{Z, B}\left\|H^{\top}-B^{\top} Z\right\|_F^2+\beta\left\|Z\right\|_F^2+\alpha\|B H^{\top}-Z\|^2_F .
\end{equation}
It has two advantages over other anchor-based methods: the efficient and adaptive generation of high-quality anchors. First, existing methods often repeat many times to reduce the uncertainty in results, which is time-consuming and not suitable for large-scale data. Second, existing methods perform anchor selection and graph learning in two separate steps. By contrast, we follow a joint learning approach, where anchors and anchor graphs will be mutually enhanced by each other. %Moreover, constructed anchor graph and anchors in our method are controlled by each other, which means both of them are adaptive. This indicates our method is easy to extend on different types of data.\\
\indent \\
For a multi-view scenario, each view could contribute differently. Therefore, we introduce learnable weights
$\{\lambda_v\}_{v=1}^V$ and achieve a consensus anchor graph $Z$ by solving the following model:
\begin{equation} \label{mobj}
    \begin{aligned}
    \min _{Z, \{B^v\}_{v=1}^V,\{\lambda_v\}_{v=1}^V}&\sum_{v=1}^V\lambda_v^2(\left\|H^v{}^{\top}-B^v{}^{\top} Z\right\|_F^2 \\
    &+\alpha\|B^v H^v{}^{\top}-Z\|^2_F) +\beta\left\|Z\right\|_F^2,\\
    &\text{s.t.}\space \space \sum_{v=1}^V\lambda_v=1,\space \lambda_v > 0.
    \end{aligned}
\end{equation}
Note that we learn anchors for each view to capture distinctive information. After constructing anchor graph $Z$, $Z^{\top}\Delta Z$ can be used as input to obtain the spectral embedding for clustering in traditional anchor-based methods, where $\Delta$ is a diagonal matrix with $\Delta_{ii}=\sum_{j=1}^N Z_{ji}$. According to \cite{FPMVS}, the right singular vectors of $Z$ are the same as the eigenvectors of $Z^{\top}\Delta Z$. Consequently, we perform singular value decomposition (SVD) on $Z$ and then run $K$-means on the right vector to produce the final result, which needs $\mathcal{O}(m^2N)$ instead of $\mathcal{O}(N^3)$.

\subsection{Optimization} \label{opt}

To solve Eq. (\ref{mobj}), we use an alternative strategy.
\subsubsection{Initialization of $B^v$}
We could optionally initialize $B^v$ with the cluster centers by dividing $H^v$ into $m$ partitions with a $K$-means algorithm. %In fact, the initialization of $B^v$ is unimportant, the optimization will eventually generate $B_v$ adaptively.
\subsubsection{Update $Z$}
By fixing  $\{B^v\}_{v=1}^V$ and $\{\lambda_v\}_{v=1}^V$, we set the derivative of the objective function with respect to $Z$ to zero, we have:
\begin{equation}
    Z = {(1+\alpha)}[\beta I_m +\sum_{v=1}^V\lambda_v^2(B^vB^v{}^{\top}+\alpha I_m)]^{-1}(\sum_{v=1}^V\lambda_v^2B^vH^v{}^{\top})
\end{equation}
For single-view scenario, the solution is $Z=(1+\alpha)(BB^{\top}+(\alpha+\beta)I_m)^{-1}(BH^{\top})$.
\subsubsection{Update $\{B^v\}_{v=1}^V$}
By fixing $Z$ and $\{\lambda_v\}_{v=1}^V$,  Eq. (\ref{mobj}) can be rewritten as: 
\begin{equation}
    R B^v+\beta B^v T^v =C^v,
\end{equation}
where $R=ZZ^{\top}\in \mathbb{R}^{m\times m}$, $T^v=H^v{}^{\top}H^v\in \mathbb{R}^{d_v\times d_v}$, 
$C^v=(1+\alpha)ZH^v\in \mathbb{R}^{m\times d_v}$. Then we can obtain $B^v$ by solving Sylvester Equation.
\subsubsection{Update $\{\lambda_v\}_{v=1}^V$}
Fixing $Z$ and $\{B^v\}_{v=1}^V$, we let $M_v=\left\|H^v{}^{\top}-B^v{}^{\top} Z\right\|_F^2+\alpha\|B^v H^v{}^{\top}-Z\|^2_F$. Then the problem is simplified as: 
\begin{equation}
  %  \begin{aligned}
        \underset{\lambda_v}{\operatorname{min}}\sum_{v=1}^V\lambda_v^2M_v,\quad
        \text{s.t.} \hspace{0.1cm} \sum_{v=1}^ V\lambda_v=1, \lambda_v > 0.
%    \end{aligned}
\end{equation}
This is a standard quadratic programming problem, which yields: $ \lambda_v = \frac{\frac{1}{M_v}}{\sum_{p=1}^V\frac{1}{M_p}}.$
   
\textbf{Comment} The optimization procedure will monotonically decrease the objective function value in Eq. (\ref{mobj}) in each iteration. Since the objective function has a lower bound, such as zero, the above iteration converges.
\subsection{Complexity Analysis} \label{CA}
The adjacency graph is often sparse in real-world scenarios. Consequently, we implement graph filtering with sparse matrix techniques, which takes linear time while the multiplication operation takes $\mathcal{O}(f_1N^2)$ in general, where $f_1=\sum_{v=1}^V d_v$. Assume there are  $t$ iterations in total, then the optimization of $Z$ takes $\mathcal{O}(t\operatorname{max}(m^3, mf_1N))$. Specifically, all multiplications and additions take $\mathcal{O}(tmf_1N)$ and the inverse operation needs $\mathcal{O}(tm^3)$. Then optimization of $B^v$ and $\{\lambda_v\}_{v=1}^V$ takes $\mathcal{O}(tf_2)$ and $\mathcal{O}(tmf_1N)$, where $f_2=\sum_{v=1}^V d_v^3$. It is worth pointing out that anchor generation has a constant complexity, which will not be limited by the size of the data. We perform SVD on $Z$ and implement $K$-means to obtain clustering result, which takes $\mathcal{O}(m^2N)$ and $\mathcal{O}(\bar{t}cmN)$ respectively, where $\bar{t}$ is the iteration number of $K$-means and $c$ is cluster number. In practice, $d\ll N$, $m\ll N$, and $t\ll N$, $c$ and $\bar{t}$ are constants, so the proposed method has a linear-time complexity. Moreover, the highest space cost is $m\times N$ or $N \times d$, which means that our approach has a linear space complexity.

We compare our complexity with baselines in Table \ref{tc}. The iteration number $t$ is omitted. The $\widehat{P}$  represents the average degree of the graph in S$^3$GC. $l$ and $K$ are the number of view groups and nearest neighbors for each view group, where \textit{view group} is defined as a group of multiple randomly selected views. $B$ is the batch size, the remaining symbols are the same as those in the main body of CDC. It can be seen that our method has clear advantages and suffers only from the feature dimension. In addition, for high-dimensional data, dimension reduction techniques can be applied.

\begin{table}[htbp]
\label{time2}
\renewcommand\arraystretch{0.9}
  \centering
  \caption{The brief complexity analysis of recent SOTA methods.}
  \label{tc}
  \vspace{8pt}
    \setlength{\tabcolsep}{0.5mm}{
    \begin{tabular}{lccc}
    \toprule
          & Methods & Time  & Space \\
    \midrule
    \multicolumn{1}{c}{\multirow{2}[1]{*}{Single-view}} & MVGRL & $\mathcal{O}(dN^2+d^2N)$ & $\mathcal{O}(N^2+dN)$ \\
          & S$^3$GC & $\mathcal{O}(\widehat{P}dN)$ & $\mathcal{O}(d(B\widehat{P}+N))$ \\
           \midrule
    \multicolumn{1}{c}{\multirow{2}[0]{*}{Multi-view}} & MCGC  & $\mathcal{O}(N^2+dN)$ & $\mathcal{O}(N^2)$ \\
          & MvAGC & $\mathcal{O}(mdN)$ & $\mathcal{O}((m+d)N)$ \\
          \midrule
    \multicolumn{1}{c}{\multirow{2}[0]{*}{Non-graph}} & EOMSC-CA & $\mathcal{O}((m^2+d)N+m^3)$ & $\mathcal{O}((m+d)N)$ \\
          & FastMICE & $\mathcal{O}(lm^{\frac{1}{2}}V^{\frac{1}{2}}N)$ & $\mathcal{O}((c+K+l+V)N)$ \\ 
          \midrule
    \multicolumn{1}{c}{\multirow{1}[1]{*}{Proposed}} & CDC   & $\mathcal{O}((m^2+d)N+d^3)$ & $\mathcal{O}((m+d)N)$ \\
    \bottomrule
    \end{tabular}}
\end{table}

\section{Theoretical Analysis} \label{the}
We establish theoretical support for our method: 1) the filtered features encode node attribute and topology structure; 2) the learned anchor graph is cluster-favorable.

\begin{definition}[\textbf{Grouping effect} \cite{GEKDD}] There are two similar nodes $i$ and $j$ in terms of local topology and node features, i.e., 
    $
    \mathcal{V}_i \rightarrow \mathcal{V}_j \Longleftrightarrow\left(\left\|A_i-A_j\right\|^2 \rightarrow 0\right) \wedge\left(\left\|x_i-x_j\right\|^2 \rightarrow 0\right)
    $,
    the matrix $G$ is said to have a grouping effect if
    $
    \mathcal{V}_i\rightarrow \mathcal{V}_j\Longrightarrow |G_{ip}-G_{jp}|\rightarrow0,\space \forall 1\leq p\leq N.
    $
\end{definition}
% Table generated by Excel2LaTeX from sheet 'Sheet1'

\begin{theorem} \label{gf_lemma} Define the distance between filtered node $i$ and $j$ is $\|h_i-h_j\|^2$, we have $\|h_i-h_j\|^2\leq \frac{1}{2^{2k}} [\|(A_i-A_j)\sum_{i=0}^{k-1}{i \choose N}A^iX\|^2+\|x_i-x_j\|^2]$, i.e., the filtered features $H$ preserve both topology and attribute similarity.
\end{theorem}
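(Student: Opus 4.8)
The plan is to reduce the statement to an explicit polynomial expansion of the filter and then split that expansion into an ``attribute'' part and a ``topology'' part. First I would rewrite the filter operator using $L = I - A$, so that $I - \frac{1}{2}L = \frac{1}{2}(I+A)$ and hence $H = \frac{1}{2^{k}}(I+A)^{k}X$. Since $I$ and $A$ commute, the binomial theorem gives $(I+A)^{k} = \sum_{t=0}^{k}\binom{k}{t}A^{t}$, so the $i$-th filtered row is $h_i = \frac{1}{2^{k}}\sum_{t=0}^{k}\binom{k}{t}A_i^{t}X$, where $A_i^{t}$ denotes the $i$-th row of $A^{t}$. This is the representation on which the whole argument hinges.

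Next I would form the difference $h_i - h_j$ and isolate the $t=0$ term. Because $A^{0}=I$, the $t=0$ contribution is exactly $\frac{1}{2^{k}}(x_i - x_j)$, which accounts for the attribute similarity. For the remaining orders $t\ge 1$ the key algebraic observation — and the step I expect to carry the argument — is the row factorization $A_i^{t} = A_i A^{t-1}$, so that $A_i^{t} - A_j^{t} = (A_i - A_j)A^{t-1}$. This pulls the row difference $A_i - A_j$ out of every higher-order term and collects the topology contribution into a single factor. After re-indexing $s=t-1$, the surviving sum runs from $0$ to $k-1$ with coefficients $\binom{k}{s+1}$, which matches (up to the binomial-coefficient notation) the sum $\sum A^{i}X$ appearing in the statement.

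Combining the two pieces gives $h_i - h_j = \frac{1}{2^{k}}\left[(x_i - x_j) + (A_i - A_j)\sum_{s=0}^{k-1}\binom{k}{s+1}A^{s}X\right]$. Taking the squared Euclidean norm, pulling out the scalar $\frac{1}{2^{2k}}$, and applying sub-additivity of the squared norm to the two summands then yields exactly the claimed inequality. To read off the ``preservation'' conclusion I would invoke the grouping-effect hypothesis: as $\|A_i - A_j\|^{2}\to 0$ and $\|x_i - x_j\|^{2}\to 0$, both terms on the right-hand side vanish, forcing $\|h_i - h_j\|^{2}\to 0$, so topologically and attribute-wise similar nodes are mapped to nearby filtered representations. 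The only delicate point is the norm-splitting step: a clean triangle-inequality bound produces a harmless factor of $2$ on the cross term, so the statement as written implicitly treats the two components as (near-)orthogonal; I would either absorb this into a constant or flag the standard refinement $\|a+b\|^{2}\le 2\|a\|^{2}+2\|b\|^{2}$.
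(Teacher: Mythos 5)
Your proposal follows essentially the same route as the paper's own proof: rewrite the filter as $H=\frac{(I+A)^k}{2^k}X$, split off the identity term to isolate the attribute difference $x_i-x_j$, factor the row difference $(A_i-A_j)$ out of all higher-order powers, and then bound the squared norm of the sum by the sum of squared norms. Your version is in fact the more careful one on two counts: you identify the binomial coefficients correctly as $\binom{k}{s+1}$ where the paper writes the garbled ${i \choose N}$, and the ``delicate point'' you flag is genuine --- the paper applies $\|a+b\|^2\leq\|a\|^2+\|b\|^2$ without justification, an inequality that is false in general unless the cross term is nonpositive, so your explicit caveat (absorb a factor of $2$ via $\|a+b\|^2\leq 2\|a\|^2+2\|b\|^2$, or note the implicit near-orthogonality assumption) repairs a gap that the paper's own argument leaves open.
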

% \begin{proof}
%     \begin{equation*}
%         \begin{aligned}
%             &\because H=\frac{(A+I)^k}{2^k}X=\frac{A\sum_{i=0}^{k-1}{i \choose N}A^i+I}{2^k}X=\frac{A\sum_{i=0}^{k-1}{i \choose N}A^{k-i}X+X}{2^k}\\
%             &\therefore||h_i-h_j||^2=||\frac{(A\sum_{i=0}^{k-1}{i \choose N}A^i+X)_i-(A\sum_{i=0}^{k-1}{i \choose N}A^i+X)_j}{2^k}||^2=\frac{1}{2^{2k}}||(A_i-A_j)\sum_{i=0}^{k-1}{i \choose N}A^i+(X_i-X_j)||^2\\
%             &\leq \frac{1}{2^{2k}} [||(A_i-A_j)\sum_{i=0}^{k-1}{i \choose N}A^i||^2+||X_i-X_j||^2]
%         \end{aligned}
%     \end{equation*}
% \end{proof}
%  % Table generated by Excel2LaTeX from sheet 'Sheet1'
%\textit{Proof Sketch.} %We prove this theorem by splitting items and inequalities. 
%If $\mathcal{V}_i\rightarrow \mathcal{V}_j$, $\|h_i-h_j\|\rightarrow 0$.  This indicates that the filtered representations of similar nodes in both attribute and topology space get closer, and different graph filtering orders will control this bias. However, when nodes are similar to each other in only one space, 
%i.e., either $\|A_i-A_j\|^2\rightarrow 0$ or $\|x_i - x_j\|^2\rightarrow 0$,
% $\|h_i-h_j\|^2$ has a non-zero upper bound unless $k$ is large enough. Therefore, graph filter does improve the distinguishableness of obtained representations by pulling similar nodes in two spaces closer. 
%\indent \\
%\indent \\

\begin{proof}
Note $L=I-A$, then $I-\frac{1}{2}L=\frac{A+I}{2}$. Then we have $H=(I-\frac{1}{2}L)^kX=\frac{(A+I)^k}{2^k}X$. Expand it as follows:
\begin{align*}
    H &=\frac{(A+I)^k}{2^k}X=\frac{A\sum_{i=0}^{k-1}{i \choose N}A^i+I}{2^k}X
    \\&=\frac{A\sum_{i=0}^{k-1}{i \choose N}A^{k-i}X+X}{2^k}.
\end{align*}

Then compute the distance of node $i$ and $j$:
\begin{equation}
    \begin{aligned}
        &\|h_i-h_j\|^2\\
        &=\|\frac{(A\sum_{i=0}^{k-1}{i \choose N}A^i X+X)_i-(A\sum_{i=0}^{k-1}{i \choose N}A^i X+X)_j}{2^k}\|^2\\
        &=\frac{1}{2^{2k}}||(A_i-A_j)\sum_{i=0}^{k-1}{i \choose N}A^i X+(X_i-X_j)\|^2\\
        &\leq \frac{1}{2^{2k}} [\|(A_i-A_j)\sum_{i=0}^{k-1}{i \choose N}A^i X\|^2+\|X_i-X_j\|^2]
    \end{aligned}
\end{equation}
So, if $\mathcal{V}_i\rightarrow \mathcal{V}_j$, $\|h_i-h_j\|\rightarrow 0$.  However, when nodes are similar to each other in only one space, 
i.e., either $\|A_i-A_j\|^2\rightarrow 0$ or $\|x_i - x_j\|^2\rightarrow 0$,$\|h_i-h_j\|^2$ has a non-zero upper bound unless $k$ is large enough. This indicates that the filtered representations of similar nodes in both attribute and topology space get closer, and different graph filtering order will adjust this bias.
\begin{theorem}
    \label{lem:grouping1} Let $G=Z^{\top}$, then $|G_{ip}-G_{jp}|^2\leq \|h_i-h_j\|^2\|C_2\|^2_F$, where $C_2$ is a constant matrix. We have $\mathcal{V}_i\rightarrow \mathcal{V}_j$, $|G_{ip}-G_{jp}|^2\rightarrow 0$, i.e.,
    the learned anchor graph $Z$ has a grouping effect.
\end{theorem}
\end{proof}

\begin{proof}
Define $G^*=Z^*{}^{\top}=(1+\alpha)(HB^{\top})(BB^{\top}+(\alpha+\beta)I_m)^{-1}$ and $
\mathcal{L}_i=\|h_i-g_iB\|^2+\alpha\|h_iB^{\top}-g_i\|^2+\beta\|g_i\|^2
$, where $g_i$ is the $i$th row of $G$. Then let $\frac{\partial \mathcal{L}_i}{\partial G_{ip}}|_{g_i=g_i^*}=0$, which yields
%\begin{equation*}
 %   -(h_i-g_iB)(B_p^{\top})+\beta G_{ip}+\alpha (h_iB_p^{\top}-G_{ip})=0.
%\end{equation*}
 $G_{ip}=\frac{(h_i-g_iB)B_p^{\top}+\alpha h_iB_p^{\top}}{\beta-\alpha}$. Let $C_{1}=(BB^{\top}+(\alpha+\beta)I_m)^{-1}$, thus $g_i=(1+\alpha)h_iB^{\top}C_{1}$. Eventually, 
\begin{equation*}
\begin{aligned}
    G_{ip}&=\frac{[h_i-(1+\alpha)h_iB^{\top}C_{1}B]B_p^{\top}+\alpha h_iB_p^{\top}}{\beta-\alpha}\\
    &=\frac{h_i(1+\alpha)(I_d-B^{\top}C_{1}B)B_p^{\top}}{\beta-\alpha}.
\end{aligned}
\end{equation*}

Remarking $C_2=\frac{(1+\alpha)(I_d-B^{\top}C_{1}B)B_p^{\top}}{\beta-\alpha}$, we obtain $|G_{ip}-G_{jp}|^2\leq \|h_i-h_j\|^2\|C_2\|^2$.
\end{proof}

This indicates that local structures of similar nodes tend to be identical on the learned graph $Z$, which makes corresponding nodes be clustered into the same group. In other words, the learned graph is clustering-friendly. To intuitively demonstrate the grouping effect of the anchor graph, we plot five diagrams of $Z$ in Fig. \ref{visual} and apply different colors to represent the different values. Since the used data sets contain too many nodes, we randomly select 20 nodes of each class for clear visualization and set the anchor number $m=3$, i.e., $Z\in \mathbbm{R}^{60\times 3}$. In these figures, we have reordered all indices of selected nodes and shown them on the x-axis; that is, the 1st-20th, 21st-40th and 41st-60th nodes are of three different classes. The values of $Z$ are on the y-axis. It's clear that nodes from the same group tend to have similar values, which in turn leads to better clustering results. Moreover, $Z$ achieving higher clustering performance has a stronger grouping effect; for example, $Z$ on ACM has a stronger grouping effect than the one on Pubmed (details on ACM and Pubmed can be found in Section \ref{exps}).
       \begin{figure}
           \centering
           \subfigure[ACM(ACC=93.6\%)]{
           \includegraphics[width=1.0\linewidth]{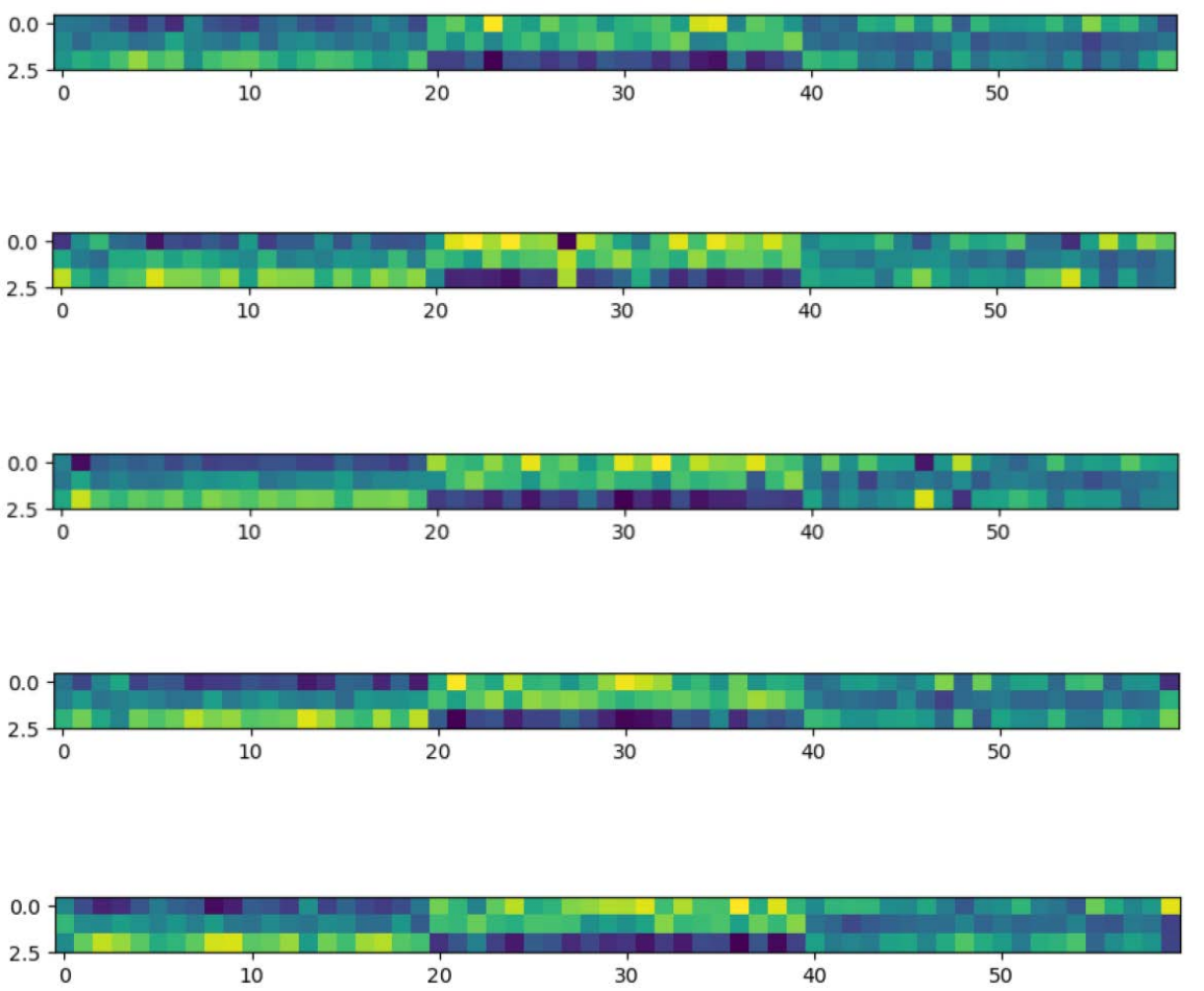}
           }
           \subfigure[Pubmed(ACC=74.1\%)]{
           \includegraphics[width=1.\linewidth]{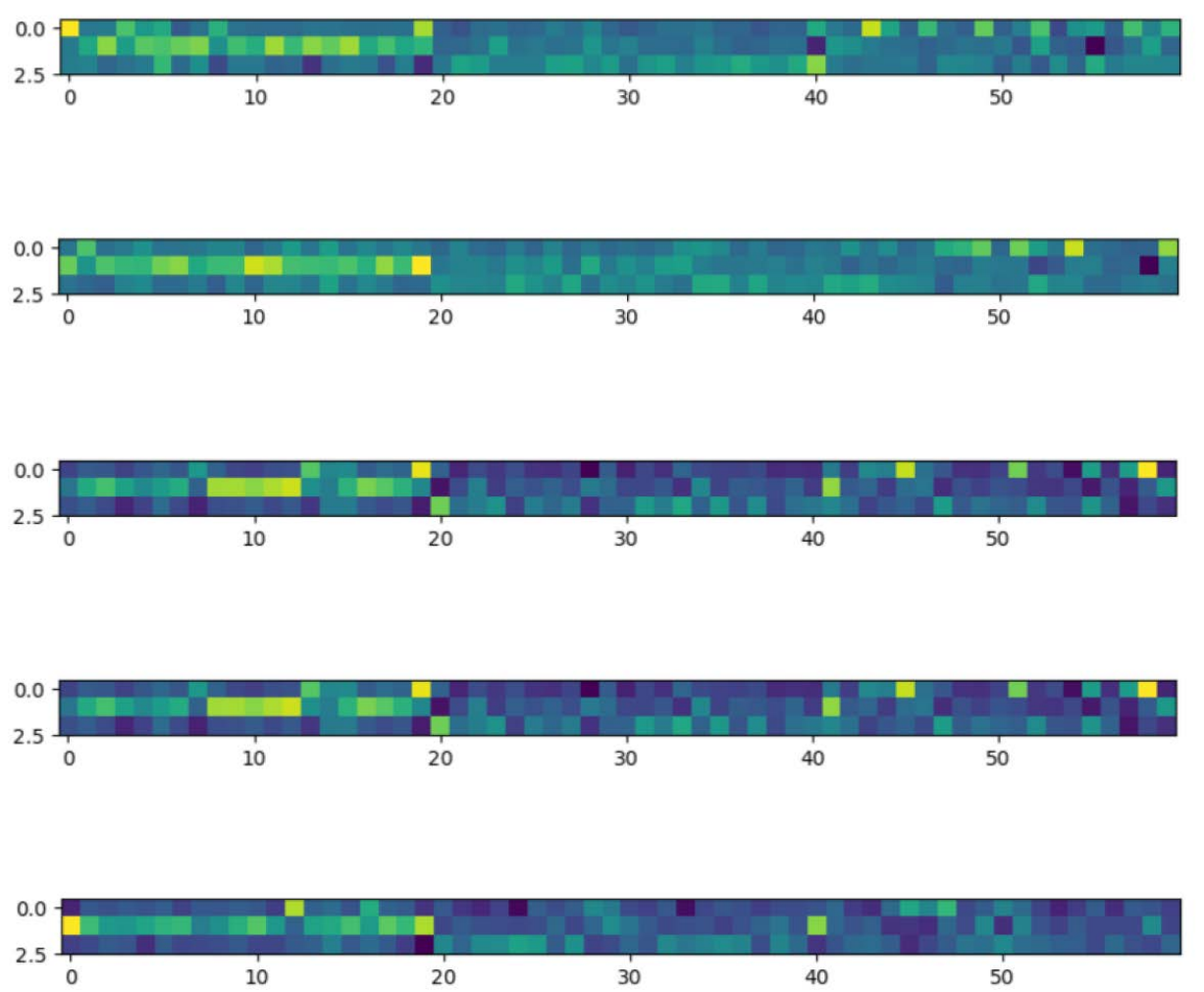}
           }
           \caption{Visualization of learned graph $Z$'s grouping effect.}
           \label{visual}
       \end{figure}

\section{Experiments} \label{exps}
\subsection{Datasets and Metrics}
\begin{table*}[htbp]
  \centering
  \small
  \caption{Statistical information of datasets.}
    \setlength\tabcolsep{0.8pt}
    \begin{tabular}{cccccc}
    \toprule
    \multicolumn{2}{c}{Type} & Datasets & Samples & Edges/Dims & Clusters \\
    \midrule
    \multirow{8}[8]{*}{Graph} & \multirow{2}[2]{*}{Single-view} & Citeseer & 3327   & 4614 / 3703 & 6 \\
          &       & Pubmed & 19717   & 44325 / 500 & 3 \\
\cmidrule{2-6}          & \multirow{2}[2]{*}{Multi-relational} & ACM   & 3025    & 29281, 2210761 / 1830 & 3 \\
          &       & DBLP  & 4057   & 11113, 5000495, 6776335 / 334 & 4 \\
\cmidrule{2-6}          & \multirow{2}[2]{*}{Multi-attribute} & AMAP  & 7487  & 119043 / 745, 7487 & 8 \\
          &       & AMAC  & 13381  & 245778 / 767, 13381 & 10 \\
\cmidrule{2-6}          & \multirow{2}[2]{*}{Extra-/Large-scale} & Products & 2449029   & 61859140 / 100 & 47 \\
          &       & Papers100M & 111059956  & 1615685872 / 128 & 172 \\
    \midrule
    \multirow{2}[2]{*}{Non-graph} & \multirow{2}[2]{*}{Large-scale multi-view} & YTF-31 & 101499   & 507495 / 64, 512, 64, 647, 838 & 31 \\
          &       & YTF-400 & 398191& 1990955 / 944, 576, 512, 640 & 400 \\
    \bottomrule
    \end{tabular}%
  \label{datasets}%
\end{table*}%

To show the effectiveness and efficiency of the CDC, we evaluate CDC on 10 benchmark datasets, including 6 multi-view data and 4 single-view data. More specifically, ACM and DBLP \cite{O2MAC} are multi-view graphs with multiple relations, AMAP and AMAC \cite{MvAGC} are multi-attribute graphs, YTF-31 \cite{EOMSC-CA} and YTF-400 \cite{FastMICE} are multi-view non-graph data (YouTube-Faces); Citeseer, Pubmed \cite{GCN}, Products and Papers100M \cite{GraphSAGE} are single-view graphs, where the latter two are from Open Graph Benchmarks \cite{hu2020open}.  The statistical information for these datasets is shown in Table \ref{datasets}. Most notably, YTF-400 represents the largest multi-view non-graph dataset, while Papers100M is the largest graph used in the clustering task. We adopt four popular clustering metrics, including ACCuracy (ACC), Normalized Mutual Information (NMI), F1 score, and Adjusted Rand Index (ARI). ACC measures the degree of match between clustering results and ground truth labels. NMI assesses the similarity between two clusters, providing a score of perfect agreement. The F1 score balances precision and recall in clustering. ARI quantifies cluster similarity, adjusting for chance. A higher value of them indicates better performance.

\begin{table*}[]
\centering
\caption{The parameter ranges for baselines.}
\begin{tabular}{llllllll}
\hline
&Methods   & Learning rate  & Filter order & Loss trade-off     & Others                                    \\ \hline

    \multicolumn{1}{c}{\multirow{1}[1]{*}{Single-view}} &CCGC      & 1e-2,1e-3,1e-4 & 1,2,3,4,5     & 1e-2,1e-1,...,1e2  & $\tau$ = 0.5,0.55,0.6,0.65,0.7            \\ \hline
    \multicolumn{1}{c}{\multirow{2}[1]{*}{Multi-relational }} &BTGF      & 1e-2,1e-3      & 1,2,3,4,5     &                    & $\gamma$ = 0.1,1,10,100,1000              \\
&DMG       & 1e-3,1e-4      & 1,2,3         & 1e-3,1e-2,...,1e3  & $\tau$ = 1,2                              \\ \hline
\multicolumn{1}{c}{\multirow{3}[1]{*}{Non-graph multi-view}} &FastMICE  &                & 1,2,3,4,5     & 1e-2,1e-1,...,1e2  &                                           \\ 
&MSGL  &                & 1,2,3,4,5     & 1e-2,1e-1,...,1e2  &                                           \\
&LMVSC &                & 1,2,3,4,5     & 1e-2,1e-1,...,1e2  &                                           \\ \hline
\multicolumn{1}{c}{\multirow{5}[1]{*}{Multi-attribute}} &MCGC      & 1e-2,1e-3,1e-4 & 1,2,3,4,5     & 1e-3,1e-2,...,1e-3 &                                           \\ 
&MvAGC     &                & 1,2,3,4,5     & 1,10,40,70,100     & $\gamma$ = 3,4,5,6,7;P = 1,2,3            \\
&MAGCN     & 1e-2,1e-3,1e-4 & 1,2,3         & 1e-2,1e-1,...,1e2  &                                           \\ 
&MVGRL     & 1e-2,1e-3      & 2,4,8,12      & 1e-3,1e-2,...,1e3  & $\alpha$=2, t=5                           \\ 
&COMPLETER & 1e-3,1e-4           &               & 1e-2,1e-1,...,1e2  & $\alpha$ = 1,3,5,7,9 $\eta$ = 0.3,0.7,0.9 \\ \hline
\end{tabular}
\label{123}
\end{table*}

\subsection{Experimental Setup}

We compare CDC\footnote{https://github.com/XieXuanting} with a number of single-view methods as well as multi-view methods.\\
\textbf{Single-view graph}
Baselines include METIS \cite{METIS}, Node2vec \cite{Node2vec}, DGI \cite{DGI}, DMoN \cite{DMoN}, GRACE \cite{GRACE}, BGRL \cite{BGRL}, MVGRL \cite{MVGRL}, S$^3$GC  \cite{S3GC} \textcolor{black}{, and CCGC \cite{CCGC}}.

METIS uses only structural information to partition graphs. Node2vec is a well-known graph embedding algorithm based on random walks. DMoN integrates spectral clustering with graph neural networks. DGI learns node representations by maximizing mutual information between patch representations and the corresponding high-level summaries of graphs. GRACE, BGRL, and MVGRL are three contrastive graph representation learning methods. S$^3$GC is a recent scalable graph clustering method, which uses a light encoder and a random walk-based sampler. CCGC \cite{CCGC} develops a new method for constructing positive and negative samples.\\

\textbf{Multi-view graph}
% Table generated by Excel2LaTeX from sheet 'Sheet1'
There are 11 baselines on multi-view graphs clustering, including DAEGC \cite{DAEGC}, O2MAC \cite{O2MAC}, HDMI \cite{HDMI}, CMGEC \cite{CMGEC}, COMPLETER \cite{COMPLETER}, MvAGC \cite{MvAGC}, MCGC \cite{MCGC}, MVGRL \cite{MVGRL}, \textcolor{black}{ BTGF \cite{qian2024upper}, DMG \cite{mo2023disentangled},} and MAGCN \cite{MAGCN}. The first five methods are only applicable to data with multiple graphs or multiple attributes, whereas the last six are applicable to general multi-view graph data.\\
Graph attention auto-encoders and GCNs are used in DAEGC. 
To get consistent embeddings, CMGEC adds a graph fusion network to multiple graph auto-encoders. In O2MAC, the most informative view is selected to learn cluster representations. HDMI learns node embeddings by using high-order mutual information. MAGCN applies the graph auto-encoder on both attributes and topological graphs to learn consensus representations. Through contrastive mechanisms, COMPLETER and MVGRL learn a common representation shared across multiple views and graphs. MCGC uses a contrastive regularizer to boost the quality of the learned graph. In MvAGC, high-order topological interactions are explored to improve clustering performance. BTGF \cite{qian2024upper} develops a novel filter to account for the correlations between graphs. DMG \cite{mo2023disentangled} aims to extract common and private information extraction.\\

\textbf{Non-graph}
We compare CDC with six scalable MVC methods on non-graph data, including BMVC \cite{BMVC}, LMVSC \cite{LMVSC}, MSGL \cite{MSGL}, FPMVS \cite{FPMVS}, EOMSC-CA \cite{EOMSC-CA}, and FastMICE \cite{FastMICE}.

BMVC learns discrete representations and binary cluster structures jointly to integrate collaborative information. MSGL and LMVSC are two scalable subspace clustering methods. FPMVS and EMOMSC-CA are two adaptive anchor-based algorithms. The differences between CDC and these methods are: 1) CDC uses a similarity-preservation regularizer while anchor matrices are assumed to be unitary matrices in FPMVS and EMOMSC; 2) the complexity of anchor generation in CDC is not linked with data size. FastMICE constructs anchor graphs using features, anchors, and neighbors jointly.\\

\textbf{Parameter setting}
We perform grid search to find the best parameters. The balance parameters $\alpha$ and $\beta$ are set as $[1e-3,1,1e1, 1e3, 1e4]$. The number of anchors $m$ is set as $[c, 10,30, 50, 70, 100]$. For comparison methods, we either directly quote the results from the original authors or conduct a grid search using the suggested range from the original paper. The search ranges are summarized in Table \ref{123}. All experiments are conducted on the same machine with the Intel(R) Core(TM) i9-12900k CPU, two GeForce GTX 3090 GPUs, and 128GB RAM.
% \vspace{-10pt}
% Table generated by Excel2LaTeX from sheet 'Sheet1'

% Table generated by Excel2LaTeX from sheet 'Sheet1'

\subsection{Results}
%\begin{wrapfigure}{R}{0.48\linewidth}
                    %\vspace{-30pt}
    	        %\begin{minipage}{1\linewidth}
                      
\begin{table}[H]
    \centering
    %\footnotesize
    \caption{Results on extra-large-scale graph.}
          \setlength{\tabcolsep}{1.mm}
            \begin{tabular}{cccccc}
            \toprule
            \multirow{2}[4]{*}{Metrics} & \multicolumn{5}{c}{Papers100M} \\
        \cmidrule{2-6}          & $K$-means & Node2vec & DGI   & S$^3$GC   & CDC \\
            \midrule
            ACC   & 0.144  & \textcolor[rgb]{ 1,  0,  0}{\textbf{0.175 }} & 0.151  & 0.173  & \textcolor[rgb]{ .267,  .447,  .769}{\textbf{0.174 }} \\
            NMI   & 0.368  & 0.380  & 0.416  & \textcolor[rgb]{ 1,  0,  0}{\textbf{0.453 }} & \textcolor[rgb]{ .267,  .447,  .769}{\textbf{0.427 }} \\
            ARI   & 0.074  & \textcolor[rgb]{ .267,  .447,  .769}{\textbf{0.112 }} & 0.096  & 0.110  & \textcolor[rgb]{ 1,  0,  0}{\textbf{0.114 }} \\
            F1    & 0.101  & 0.099  & 0.111  & \textcolor[rgb]{ .267,  .447,  .769}{\textbf{0.118 }} & \textcolor[rgb]{ 1,  0,  0}{\textbf{0.119 }} \\
            \bottomrule
            \end{tabular}%
          \label{s2}%             
                    % \vspace{-10pt}
\end{table}%
	        %\end{minipage}
%	\end{wrapfigure}
\subsubsection{Single-view Scenario}

The results on the small-scale graph Citeseer, medium-scale graph Pubmed, large-scale graph Products, and extra-large-scale graph Papers100M are shown in Table \ref{s2} and Table \ref{s1}. Among those 16 statistics, CDC achieves the best performance in 9 cases and the second best performance in 5 cases. Note that most neural network-based methods cannot handle large and extra-large-scale graphs. In Citeseer and Pubmed, our method achieves the best results, and in Products and Papers100M, our method produces competitive results. In particular, on Pubmed, CDC surpasses the most recent S$^3$GC method by more than 3$\%$ in all metrics. CDC also shows a slight advantage against S$^3$GC on the largest Papers100M dataset. CDC is inferior to S$^3$GC on Products, which may be because low-pass filtering is not enough on this dataset. Furthermore, CDC involves a lower time cost in comparison to S$^3$GC. Specifically, it takes $\sim$5mins and $\sim$4hs on Products and Papers100M, while S$^3$GC consumes $\sim$1h and $\sim$24hs, demonstrating CDC's efficiency when it comes to large/extra-large-scale graphs. With respect to many GNN-based methods, like DGI, DMON, GRACE, BGRL, and MVGRL, CDC demonstrates a clear edge.
\begin{table*}[htbp]
  \centering
  \caption{Results on single-view graphs. "-" denotes that the method ran out
of memory (OM) or didn't converge. The best results are indicated in \textbf{\textcolor{red}{red}} while the second-best results are highlighted in \textcolor[rgb]{ .267,  .447,  .769}{\textbf{blue}}.}
  \setlength\tabcolsep{1.5pt}
  
  \begin{tabular}{p{6em}cccccccccccc}
    \toprule
    \multicolumn{1}{c}{} & \multicolumn{4}{c}{Citeseer}  & \multicolumn{4}{c}{Pubmed}    & \multicolumn{4}{c}{Products} \\
    \midrule
    \multicolumn{1}{c}{Method} & ACC   & NMI   & ARI   & F1    & ACC   & NMI   & ARI   & F1    & ACC   & NMI   & ARI   & F1 \\
    \midrule
    METIS & 0.413  & 0.170  & 0.150  & 0.400   & 0.693 & 0.297 & 0.323 & 0.682 & 0.294 & 0.468 & 0.220  & 0.145 \\
    Node2vec & 0.421  & 0.240  & 0.116 & 0.401 & 0.641 & 0.288 & 0.258 & 0.634 & 0.357 & \textcolor[rgb]{ .267,  .447,  .769}{\textbf{0.489}} & \textcolor[rgb]{ .267,  .447,  .769}{\textbf{0.247}} & 0.170 \\
    DGI   & 0.686 & 0.435 & 0.445 & 0.643 & 0.657 & 0.322 & 0.292 & 0.654 & 0.320  & 0.467 & 0.192 & 0.174 \\
    DMoN  & 0.385 & 0.303  & 0.200   & 0.437 & 0.351 & 0.257 & 0.108 & 0.343 & 0.304 & 0.428 & 0.210  & 0.139 \\
    GRACE & 0.631 & 0.399 & 0.377 & 0.603 & 0.637 & 0.308 & 0.276 & 0.628 & - & - & - & - \\
    BGRL  & 0.675  & 0.422 & 0.428 & 0.631 & 0.654 & 0.315 & 0.285 & 0.649 & - & - & - & - \\
    MVGRL & \textcolor[rgb]{ .267,  .447,  .769}{\textbf{0.703}} & \textcolor[rgb]{ 1,  0,  0}{\textbf{0.459}} & \textcolor[rgb]{ 1,  0,  0}{\textbf{0.471}} & \textcolor[rgb]{ .267,  .447,  .769}{\textbf{0.654}} & 0.675 & \textcolor[rgb]{ .267,  .447,  .769}{\textbf{0.345}} & 0.310  & 0.672 & - & - & - & - \\
    S$^3$GC   & 0.688 & 0.441 & 0.448 & 0.643 & \textcolor[rgb]{ .267,  .447,  .769}{\textbf{0.713}} & 0.333 & \textcolor[rgb]{ .267,  .447,  .769}{\textbf{0.345}} & \textcolor[rgb]{ .267,  .447,  .769}{\textbf{0.703}} & \textcolor[rgb]{ 1,  0,  0}{\textbf{0.402}}  & \textcolor[rgb]{ 1,  0,  0}{\textbf{0.536}} & \textcolor[rgb]{ 1,  0,  0}{\textbf{0.25}} & \textcolor[rgb]{ 1,  0,  0}{\textbf{0.23}} \\

    \textcolor{black}{CCGC} &0.698 &0.443 &0.457 &0.627 &0.681 &0.309 &0.322&0.682&-&-&-&-\\
    CDC  & \textcolor[rgb]{ 1,  0,  0}{\textbf{0.709 }} & \textcolor[rgb]{ .267,  .447,  .769}{\textbf{0.444 }} & \textcolor[rgb]{ 1,  0,  0}{\textbf{0.471 }} & \textcolor[rgb]{ 1,  0,  0}{\textbf{0.661 }} & \textcolor[rgb]{ 1,  0,  0}{\textbf{0.741 }} & \textcolor[rgb]{ 1,  0,  0}{\textbf{0.371 }} & \textcolor[rgb]{ 1,  0,  0}{\textbf{0.383 }} & \textcolor[rgb]{ 1,  0,  0}{\textbf{0.737 }} & \textcolor[rgb]{ .267,  .447,  .769}{\textbf{0.366 }} & 0.390  & 0.121  & \textcolor[rgb]{ .267,  .447,  .769}{\textbf{0.187 }} \\
    \bottomrule
    \end{tabular}%
  \label{s1}%
  
\end{table*}%

\subsubsection{Multi-view Scenario}

\paragraph{Clustering on Multi-view Graphs}

The CDC clustering results on multi-attribute and multi-relational graphs are reported in Table \ref{mv1} and Table \ref{mv2}. The performance of CDC is much better than that of any other method across four benchmarks and all metrics. For example, compared to the second-best method, MCGC, the ACC, NMI, and ARI metrics on ACM, AMAP, and AMAC are improved by more than 5$\%$, 7$\%$, and 9$\%$ on average, respectively. Although MvAGC samples nodes as anchors, it takes more time than CDC since its sampling strategy suffers from low efficiency. Specifically, CDC is more than $2\times$ and  $5\times$ faster on the multi-relational and multi-attribute graphs, respectively. Clustering on Multi-view Graphs shows that the advantage over other methods is more significant. Therefore, CDC is a promising clustering method for graph data in various forms. CDC performs excellently in multi-view datasets but is unstable in the single-view datasets. This is because the proposed learnable weights $\{\lambda_v\}_{v=1}^V$  capture more information in multi-view graphs.

\begin{table*}[htbp]
\renewcommand\arraystretch{0.9}
  \centering
  % \footnotesize
  \caption{Results on the multi-relational graph.}
    \begin{tabular}{ccccccccc}
    \toprule
    \multicolumn{1}{c}{\multirow{2}[4]{*}{  Method}} & \multicolumn{4}{c}{ACM}       & \multicolumn{4}{c}{DBLP} \\
\cmidrule{2-9}          & ACC   & NMI   & ARI   & F1    & ACC   & NMI   & ARI   & F1 \\
    \midrule
    DAEGC  & 0.891  & 0.643  & 0.705  & 0.891  & 0.873  & 0.674  & 0.701  & 0.862  \\
    O2MAC  & 0.904  & 0.692  & 0.739  & 0.905  & 0.907  & 0.729  & 0.778  & 0.901  \\
    HDMI  & 0.874  & 0.645  & 0.674  & 0.872  & 0.885  & 0.692  & 0.753  & 0.865  \\
    CMGEC & 0.909  & 0.691  & 0.723  & 0.907  & 0.910  & 0.724  & 0.786  & 0.904  \\
    MvAGC  & 0.898  & 0.674  & 0.721  & 0.899  & 0.928  & 0.773  & 0.828  & 0.923  \\
    MCGC  & \textcolor[rgb]{ .267,  .447,  .769}{\textbf{0.915 }} & \textcolor[rgb]{ .267,  .447,  .769}{\textbf{0.713 }} & \textcolor[rgb]{ .267,  .447,  .769}{\textbf{0.763 }} & \textcolor[rgb]{ .267,  .447,  .769}{\textbf{0.916 }} & \textcolor[rgb]{ .267,  .447,  .769}{\textbf{0.930 }} & \textcolor[rgb]{ .267,  .447,  .769}{\textbf{0.775 }} & \textcolor[rgb]{ .267,  .447,  .769}{\textbf{0.830 }} & \textcolor[rgb]{ .267,  .447,  .769}{\textbf{0.925 }} \\

    \textcolor{black}{DMG} &0.871 &0.641 &0.672 &0.873 &0.885 &0.690 &0.731 &0.879 \\

    \textcolor{black}{BTGF} &0.901 &0.689 &0.731 &0.901  &0.881 &0.663 &0.725 &0.873 \\
    CDC  & \textcolor[rgb]{ 1,  0,  0}{\textbf{0.936 }} & \textcolor[rgb]{ 1,  0,  0}{\textbf{0.769 }} & \textcolor[rgb]{ 1,  0,  0}{\textbf{0.817 }} & \textcolor[rgb]{ 1,  0,  0}{\textbf{0.936 }} & \textcolor[rgb]{ 1,  0,  0}{\textbf{0.933 }} & \textcolor[rgb]{ 1,  0,  0}{\textbf{0.781 }} & \textcolor[rgb]{ 1,  0,  0}{\textbf{0.836 }} & \textcolor[rgb]{ 1,  0,  0}{\textbf{0.929 }} \\
    \bottomrule
    \end{tabular}%
  \label{mv1}%
\end{table*}
\begin{table*}[t]
\renewcommand\arraystretch{1.0}
  \centering
  % \footnotesize
  \caption{Results on the multi-attribute graph.}
    \begin{tabular}{ccccccccc}
    \toprule
    Datasets & \multicolumn{4}{c}{AMAP} & \multicolumn{4}{c}{AMAC} \\
    \midrule
    Method & ACC   & NMI   & ARI   & F1    & ACC   & NMI   & ARI   & F1 \\
    \midrule
    COMPLETER & 0.368  & 0.261  & 0.076  & 0.307  & 0.242  & 0.156  & 0.054  & 0.160  \\
    MVGRL & 0.505  & 0.433  & 0.238  & 0.460  & 0.245  & 0.101  & 0.055  & 0.171  \\
    MAGCN & 0.517  & 0.390  & 0.240  & 0.474  & -    & -    & -    & - \\
    MvAGC & 0.678  & 0.524  & 0.397  & 0.640  & 0.580  & 0.396  & 0.322  & 0.412  \\
    MCGC  & \textcolor[rgb]{ .267,  .447,  .769}{\textbf{0.716 }} & \textcolor[rgb]{ .267,  .447,  .769}{\textbf{0.615 }} & \textcolor[rgb]{ .267,  .447,  .769}{\textbf{0.432 }} & \textcolor[rgb]{ .267,  .447,  .769}{\textbf{0.686 }} & \textcolor[rgb]{ .267,  .447,  .769}{\textbf{0.597 }} & \textcolor[rgb]{ .267,  .447,  .769}{\textbf{0.532 }} & \textcolor[rgb]{ .267,  .447,  .769}{\textbf{0.390 }} & \textcolor[rgb]{ .267,  .447,  .769}{\textbf{0.520 }} \\
    CDC  & \textcolor[rgb]{ 1,  0,  0}{\textbf{0.795 }} & \textcolor[rgb]{ 1,  0,  0}{\textbf{0.707 }} & \textcolor[rgb]{ 1,  0,  0}{\textbf{0.620 }} & \textcolor[rgb]{ 1,  0,  0}{\textbf{0.730 }} & \textcolor[rgb]{ 1,  0,  0}{\textbf{0.647 }} & \textcolor[rgb]{ 1,  0,  0}{\textbf{0.604 }} & \textcolor[rgb]{ 1,  0,  0}{\textbf{0.437 }} & \textcolor[rgb]{ 1,  0,  0}{\textbf{0.546 }} \\
    \bottomrule
    \end{tabular}%
  \label{mv2}%
\end{table*}%

\paragraph{Clustering on Multi-view Non-graph data}
 % \begin{wrapfigure}{R}{0.53\linewidth}
% \vspace{-10pt}
    	        %\begin{minipage}{1\linewidth}
             %\vspace{-20pt}
                        % \centering
            \begin{table}[H]
                    % \label{nongraph}%
                    \centering
                    \footnotesize
                    
                      \caption{Results on large-scale multi-view non-graph data.}
  % \caption
   \setlength{\tabcolsep}{0.4mm}{
    
    \begin{tabular}{ccccccc}
    \toprule
    \multirow{2}[4]{*}{Method} & \multicolumn{3}{c}{YTF-31} & \multicolumn{3}{c}{YTF-400} \\
\cmidrule{2-7}          & ACC   & NMI   & F1    & ACC   & NMI   & F1 \\
    \midrule
    BMVC  & 0.090  & 0.059  & 0.058  & -   & -   & - \\
    LMVSC & 0.140  & 0.118  & 0.083  & 0.489  & 0.767  & 0.589  \\
    MSGL  & 0.167  & 0.001  & 0.151  & 0.502  & 0.738  & \textcolor[rgb]{ 1,  0,  0}{\textbf{0.606 }} \\
    FPMVS & 0.230  & 0.234  & 0.140  & 0.562  & \textcolor[rgb]{ .267,  .447,  .769}{\textbf{0.797 }} & 0.472  \\
    EOMSC-CA & 0.265  & 0.003  & 0.164  & 0.570  & 0.779  & 0.408  \\
    FastMICE & \textcolor[rgb]{ .267,  .447,  .769}{\textbf{0.275 }} & \textcolor[rgb]{ .267,  .447,  .769}{\textbf{0.236 }} & \textcolor[rgb]{ .267,  .447,  .769}{\textbf{0.295 }} & \textcolor[rgb]{ .267,  .447,  .769}{\textbf{0.564 }} & \textcolor[rgb]{ 1,  0,  0}{\textbf{0.798 }} & 0.509 \\
    CDC   & \textcolor[rgb]{ 1,  0,  0}{\textbf{0.285 }} & \textcolor[rgb]{ 1,  0,  0}{\textbf{0.260 }} & \textcolor[rgb]{ 1,  0,  0}{\textbf{0.298 }} & \textcolor[rgb]{ 1,  0,  0}{\textbf{0.571 }} & 0.745  & \textcolor[rgb]{ .267,  .447,  .769}{\textbf{0.591 }} \\
    \bottomrule
    \end{tabular}}
     \label{nongraph}
\end{table}
%	        \end{minipage}  
%	\end{wrapfigure}
To process non-graph data, we manually construct 5-nearest neighbor graphs for graph filtering. Table \ref{nongraph} shows the clustering results on YTF-31 and YTF-400. We find that most existing methods can't handle YTF-400, which is the largest non-graph multi-view data. CDC still achieves the best results in most cases. Although other methods also use anchor-based ideas, their computation time is still high. Specifically, CDC takes $\sim$20s and $\sim$1min, respectively, while EOMSC-CA requires $\sim$2mins and $\sim$6mins, and FastMICE takes $\sim$30s and $\sim$3mins on these two datasets. This verifies that CDC is also a promising clustering method for non-graph data. The time cost of several recent SOTA methods is summarized in Fig. \ref{time}.

\begin{figure*}[t]
    \centering
    \includegraphics[width=1.\linewidth]{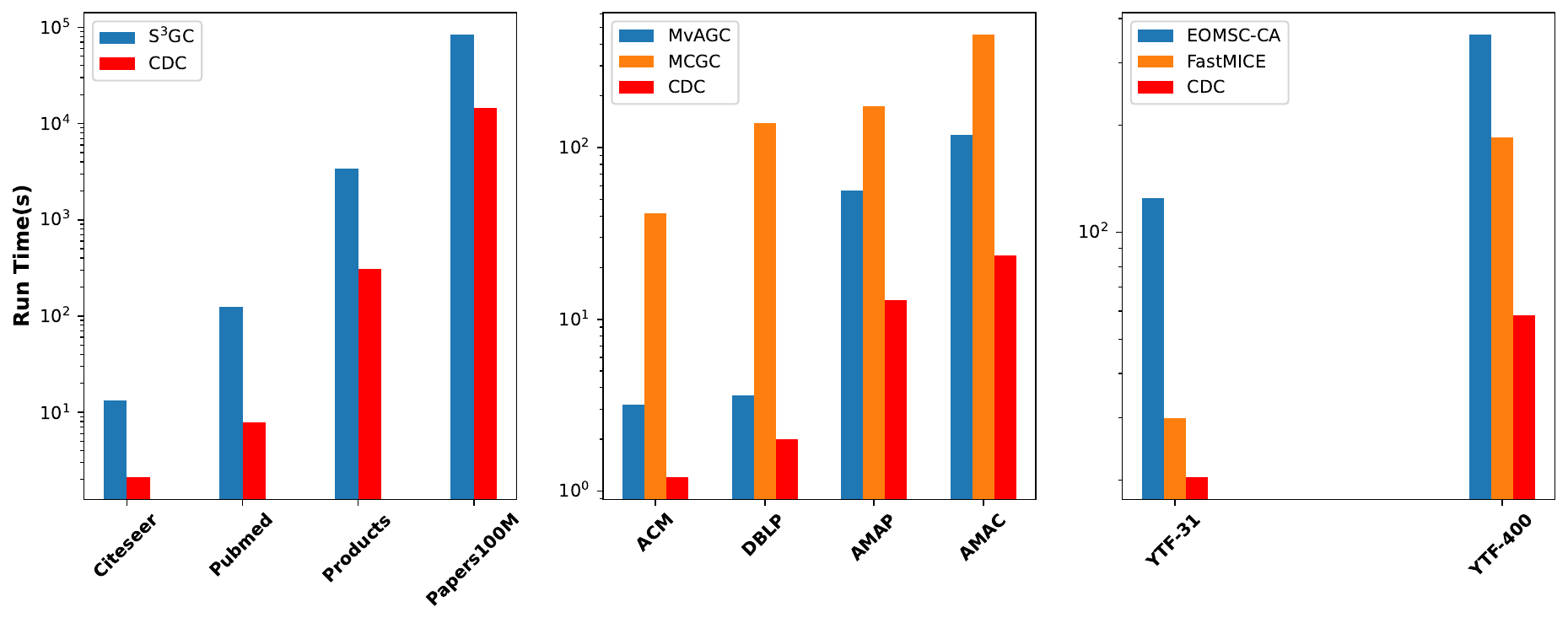}
    \caption{Run time of existing SOTA methods on various datasets}
    \label{time}
    \vspace{-10pt}
\end{figure*}
% and high-efficiency clustering method for general multi-view data.

% Table generated by Excel2LaTeX from sheet 'Sheet1'

\subsection{Ablation Study}

\subsubsection{Effect of Similarity-Preserving}
Anchors are generated adaptively in the similarity space constrained by a similarity-preserving (marked as SP) regularizer. To clearly show the effect of SP, we remove it from the model and test the performance of CDC \textit{w/o} SP on Pubmed and ACM in Table \ref{ablation}. It's clear that similarity-preserving does improve the clustering performance by 3$\%$ on average. Moreover, CDC takes less time than CDC \textit{w/o} SP on two datasets. The reason is that the computation complexity for $B^v=(ZZ^{\top})^{-1}(ZH^v)$ is $\mathcal{O}(md_vN)$, which is higher than $\mathcal{O}(d_v^3)$ in CDC. Therefore, as a bonus, the SP regularizer helps reduce the complexity of anchor generation. Moreover, it improves the quality of anchors. As observed in Fig. \ref{PA2}, CDC achieves the best results with a few anchors, which further reduces the computation cost. In fact, too many anchors could deteriorate performance since some noisy anchors, non-representative anchors could be introduced.
\begin{table}[t]
\renewcommand\arraystretch{0.8}
  \centering
  % \small
  % \vspace{-10pt}
  \caption{Results of CDC with/without GF and SR.}
  \setlength\tabcolsep{1pt}
      \begin{tabular}{ccccccccc}
    \toprule
    \multirow{2}[4]{*}{Method} & \multicolumn{4}{c}{Pubmed}    & \multicolumn{4}{c}{ACM} \\
\cmidrule{2-9}          & ACC   & NMI   & F1    & Time (s) & ACC   & NMI   & F1    & Times (s) \\
    \midrule
    CDC  & 0.741  & 0.371  & 0.737  & 2.03  & 0.936  & 0.769  & 0.936  & 0.81  \\
    \midrule
    \multirow{2}[2]{*}{\textit{w/o} SP} & 0.707  & 0.349  & 0.704  & 5.91  & 0.918  & 0.710  & 0.919  & 1.03  \\
          & \textcolor[rgb]{ 0,  .69,  .314}{\textbf{(-0.034)}} & \textcolor[rgb]{ 0,  .69,  .314}{\textbf{(-0.022)}} & \textcolor[rgb]{ 0,  .69,  .314}{\textbf{(-0.033)}} & \textcolor[rgb]{ 0,  .69,  .314}{\textbf{(+3.88)}} & \textcolor[rgb]{ 0,  .69,  .314}{\textbf{(-0.018)}} & \textcolor[rgb]{ 0,  .69,  .314}{\textbf{(-0.059)}} & \textcolor[rgb]{ 0,  .69,  .314}{\textbf{(-0.017)}} & \textcolor[rgb]{ 0,  .69,  .314}{\textbf{(+0.22)}} \\
    \midrule
    \multirow{2}[2]{*}{\textit{w/o} GF} & 0.626  & 0.256  & 0.639  & 2.86  & 0.872  & 0.585  & 0.871  & 0.88  \\
          & \textcolor[rgb]{ 0,  .69,  .314}{\textbf{(-0.115)}} & \textcolor[rgb]{ 0,  .69,  .314}{\textbf{(-0.115)}} & \textcolor[rgb]{ 0,  .69,  .314}{\textbf{(-0.098)}} & \textcolor[rgb]{ 0,  .69,  .314}{\textbf{(+0.83)}} & \textcolor[rgb]{ 0,  .69,  .314}{\textbf{(-0.064)}} & \textcolor[rgb]{ 0,  .69,  .314}{\textbf{(-0.184)}} & \textcolor[rgb]{ 0,  .69,  .314}{\textbf{(-0.055)}} & \textcolor[rgb]{ 0,  .69,  .314}{\textbf{(+0.07)}} \\
    \bottomrule
    \end{tabular}%
  \label{ablation}%
\end{table}%
\begin{figure}[htbp]
 \vspace{-10pt}
		\centering
		\subfigure[ACM]{
			\includegraphics[width=1.\linewidth]{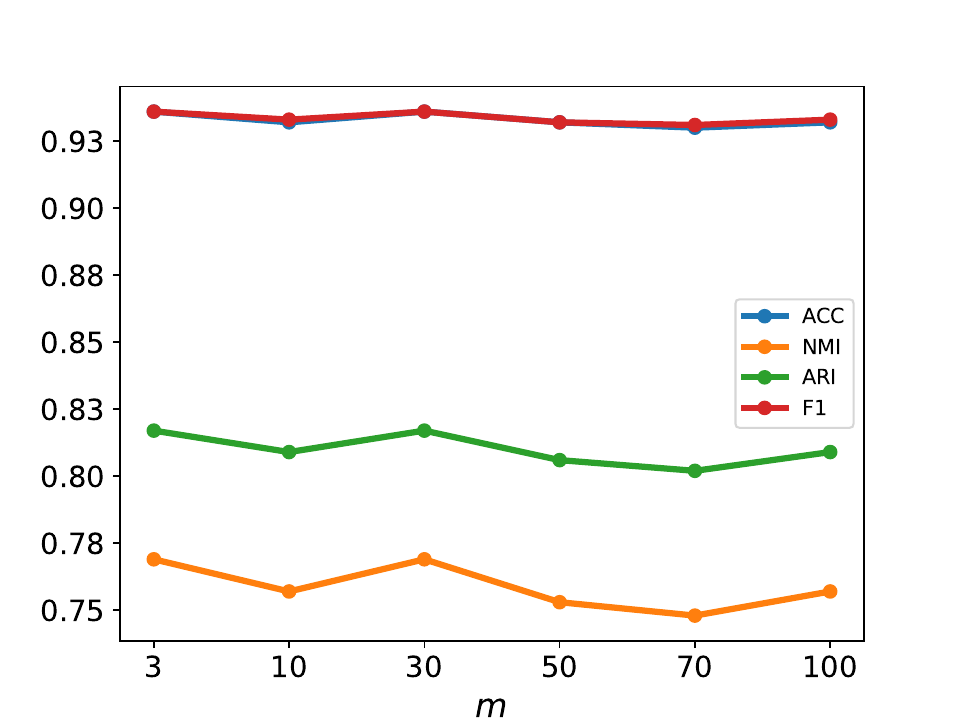}
		}
		\subfigure[Pubmed]{
			\includegraphics[width=1.\linewidth]{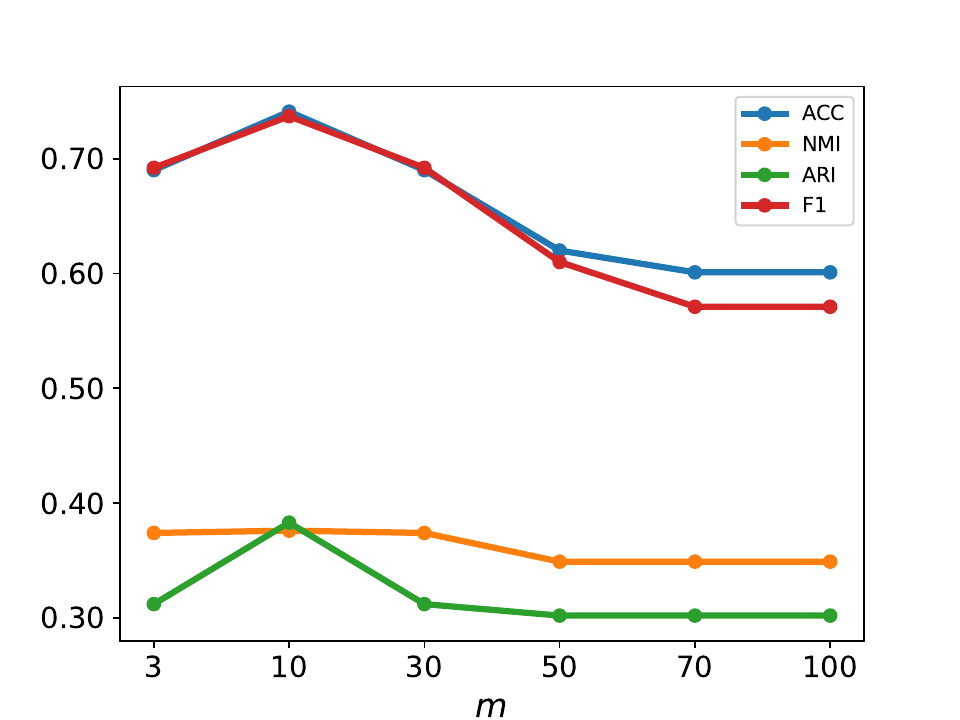}
		}
		\caption{Results on ACM and Pubmed with different anchor number $m$.}
		\label{PA2}
  %\vspace{-15pt}
	\end{figure}
% \begin{figure}[htbp]
% 		\centering
% 		\subfigure[ACM]{
% 			\includegraphics[width=0.46\linewidth]{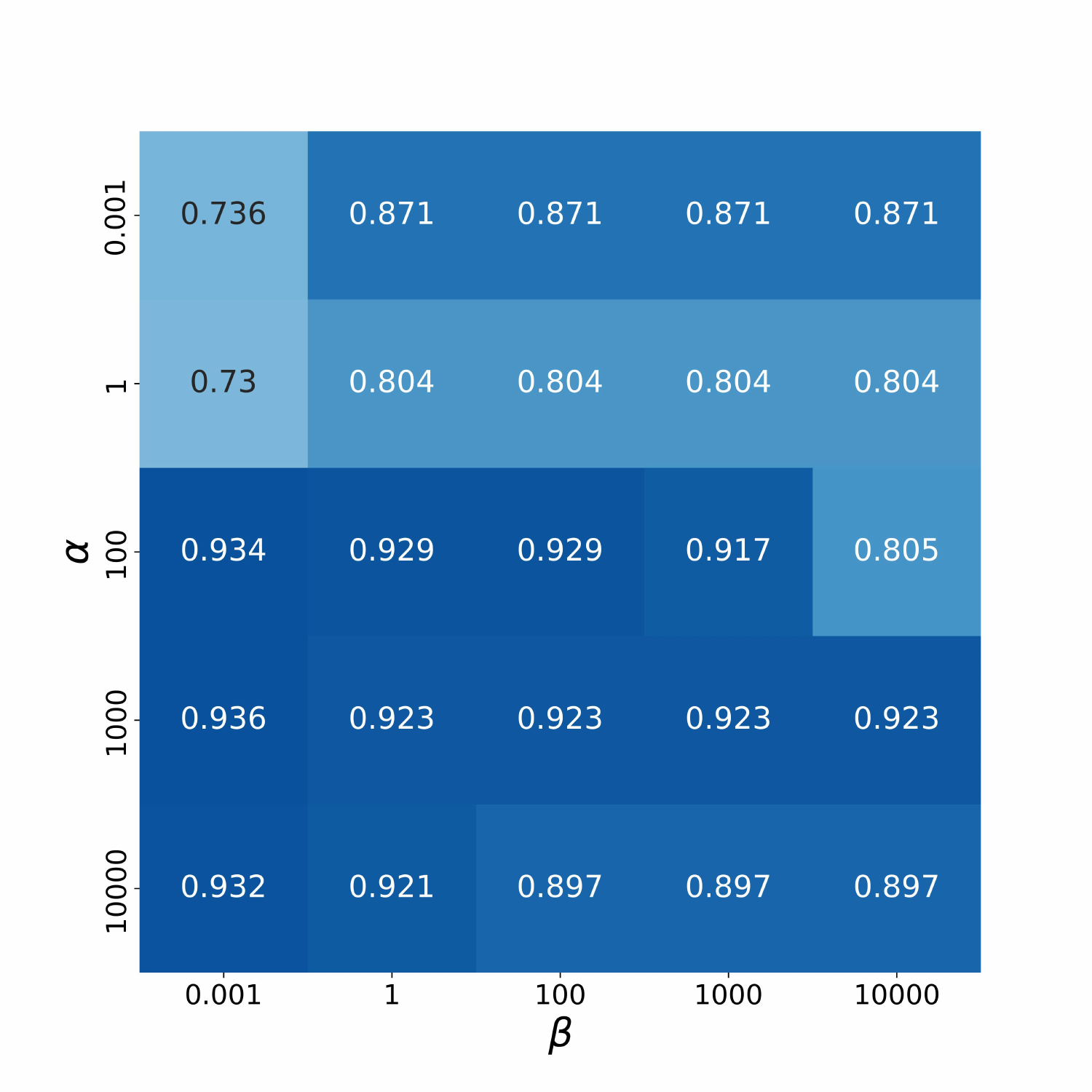}
% 		}
% 		\subfigure[Pubmed]{
% 			\includegraphics[width=0.46\linewidth]{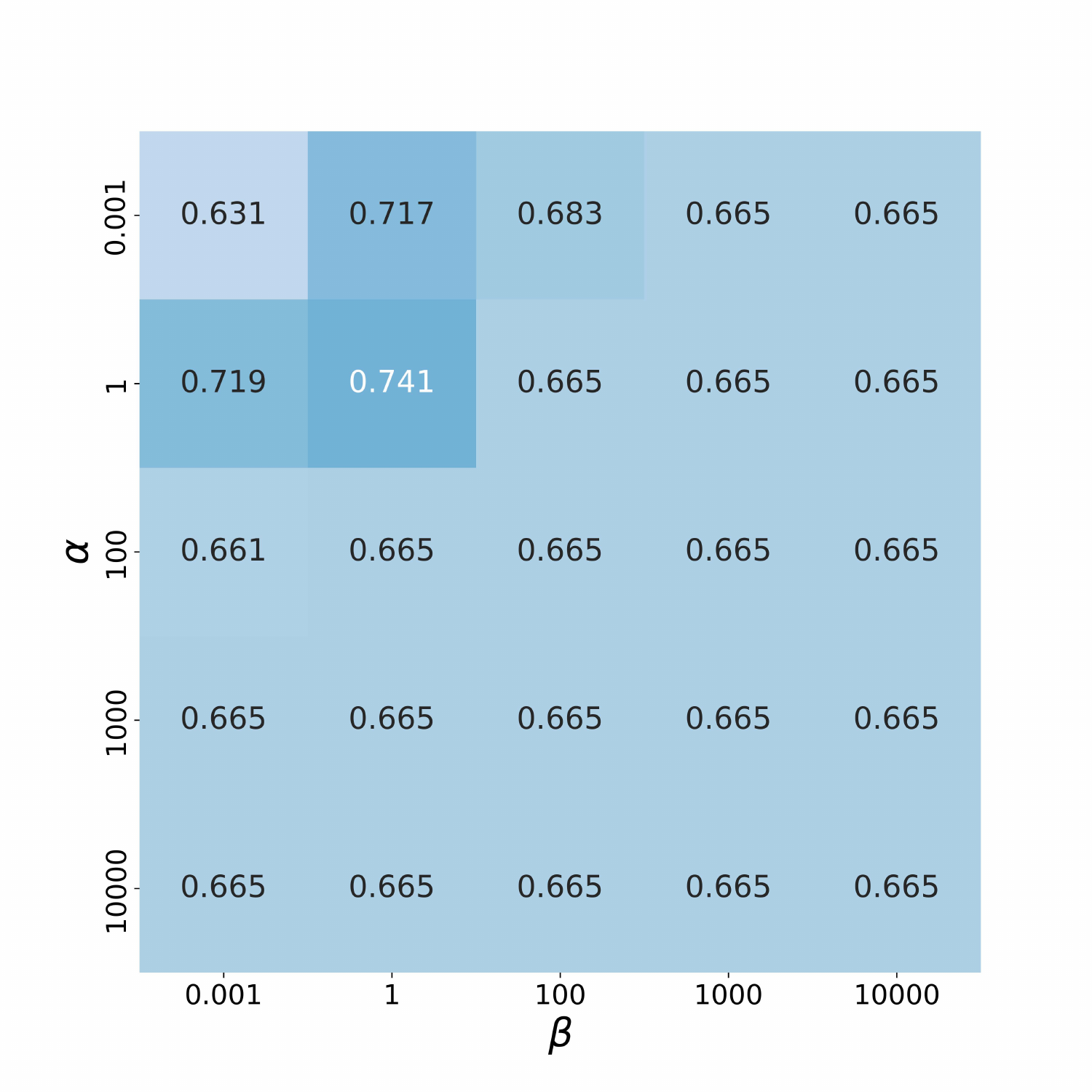}
% 		}
% 		\caption{Accuracy on ACM and Pubmed with different $\alpha$ and $\beta$.}
% 		\label{PA1}
% 	\end{figure}
% \begin{figure}[htbp]
% 		\centering
% 		\subfigure[ACM]{
% 			\includegraphics[width=0.46\linewidth]{anchor_ACM.pdf}
% 		}
% 		\subfigure[Pubmed]{
% 			\includegraphics[width=0.46\linewidth]{anchor_Pubmed.pdf}
% 		}
% 		\caption{Results on ACM and Pubmed with different anchor number $m$.}
% 		\label{PA2}
% 	\end{figure}

\subsubsection{Effect of Graph Filtering}
Graph filtering (marked as GF) is applied to integrate node attributes and topology in our method. In addition to theoretically demonstrating that the learned anchor graph from filtered representations is clustering-favorable, we also show this experimentally in Table \ref{ablation}. We can see that the performance of CDC \textit{w/o} GF drops about 10$\%$ on average, which validates the significance of graph filtering. We also observe  an increase in run time, which could be caused
by slower convergence due to the loss of cluster-ability.
% \section{Parameter Analysis}

\subsubsection{Robustness on Heterophily}
%\begin{wrapfigure}{R}{0.58\linewidth}
%\vspace{-10pt}
    	        %\begin{minipage}{1\linewidth}
                       % \centering
                        %\vspace{-20pt}
                   
\begin{table}[H]
  \centering
  \small
  \caption{Results on heterophilic graphs.}
  \setlength\tabcolsep{0.8pt}
    \begin{tabular}{ccccccccc}
    \toprule
          & \multicolumn{2}{c}{Texas} & \multicolumn{2}{c}{Cornell} & \multicolumn{2}{c}{Wisconsin} & \multicolumn{2}{c}{Squirrel} \\
    \midrule
    Methods & ACC   & NMI   & ACC   & NMI   & ACC   & NMI   & ACC   & NMI \\
    \midrule
    CDRS  & 0.599  & 0.154  & -     &   -    & \textcolor[rgb]{ .267,  .447,  .769}{\textbf{0.562 }} & 0.137  & -     & - \\
    CGC   & \textcolor[rgb]{ .267,  .447,  .769}{\textbf{0.615 }} & \textcolor[rgb]{ .267,  .447,  .769}{\textbf{0.215 }} & \textcolor[rgb]{ .267,  .447,  .769}{\textbf{0.446 }} & \textcolor[rgb]{ .267,  .447,  .769}{\textbf{0.141 }} & 0.559  & \textcolor[rgb]{ .267,  .447,  .769}{\textbf{0.230 }} & \textcolor[rgb]{ .267,  .447,  .769}{\textbf{0.272 }} & \textcolor[rgb]{ .267,  .447,  .769}{\textbf{0.030 }} \\
    CDC   & \textcolor[rgb]{ 1,  0,  0}{\textbf{0.672 }} & \textcolor[rgb]{ 1,  0,  0}{\textbf{0.293 }} & \textcolor[rgb]{ 1,  0,  0}{\textbf{0.514 }} & \textcolor[rgb]{ 1,  0,  0}{\textbf{0.142 }} & \textcolor[rgb]{ 1,  0,  0}{\textbf{0.637 }} & \textcolor[rgb]{ 1,  0,  0}{\textbf{0.318 }} & \textcolor[rgb]{ 1,  0,  0}{\textbf{0.279 }} & \textcolor[rgb]{ 1,  0,  0}{\textbf{0.043}}  \\
    \bottomrule
    \end{tabular}%
  \label{HTre}%
\end{table}%

	     %   \end{minipage}
	%\end{wrapfigure}

In some real-world applications, graphs could be heterophilic, where connected nodes tend to have different labels \cite{CGC}. To show the robustness of CDC on heterophily, we report the results on several popular heterophilic graphs, including Texas, Cornell, Wisconsin \cite{Geom-GCN}, Squirrel \cite{squirel}. As shown in Table \ref{HTre}, our proposed CDC dominates recent SOTA methods, CDRS \cite{CDRS} and CGC \cite{CGC}. Although the used low-pass filter is considered less effective for heterophily, CDC still performs well because of the high-quality anchors and clustering-friendly graph. In fact, there are few graph clustering methods designed for heterophily. Further work, such as developing an omnipotent filter, could greatly contribute to improving clustering on heterophilic graphs.
\begin{figure}[H]
    \centering
    \subfigure[ACM]{
    \includegraphics[width=0.8\linewidth]{}
    }
    \subfigure[Pubmed]{
    \includegraphics[width=0.8\linewidth]{}
    }
    \caption{Accuracy on ACM and Pubmed with different $\alpha$ and $\beta$.}
    \label{PA1}
\end{figure}

In addition, we  visualize the objective function value of CDC for ACM, Pubmed, and YTF-31 in Fig. \ref{lossv}. It can be seen that the losses rapidly converge.
  \begin{figure}[!htbp]
     
     \centering
     \subfigure[loss of CDC on ACM]{
     \includegraphics[width=0.85\linewidth]{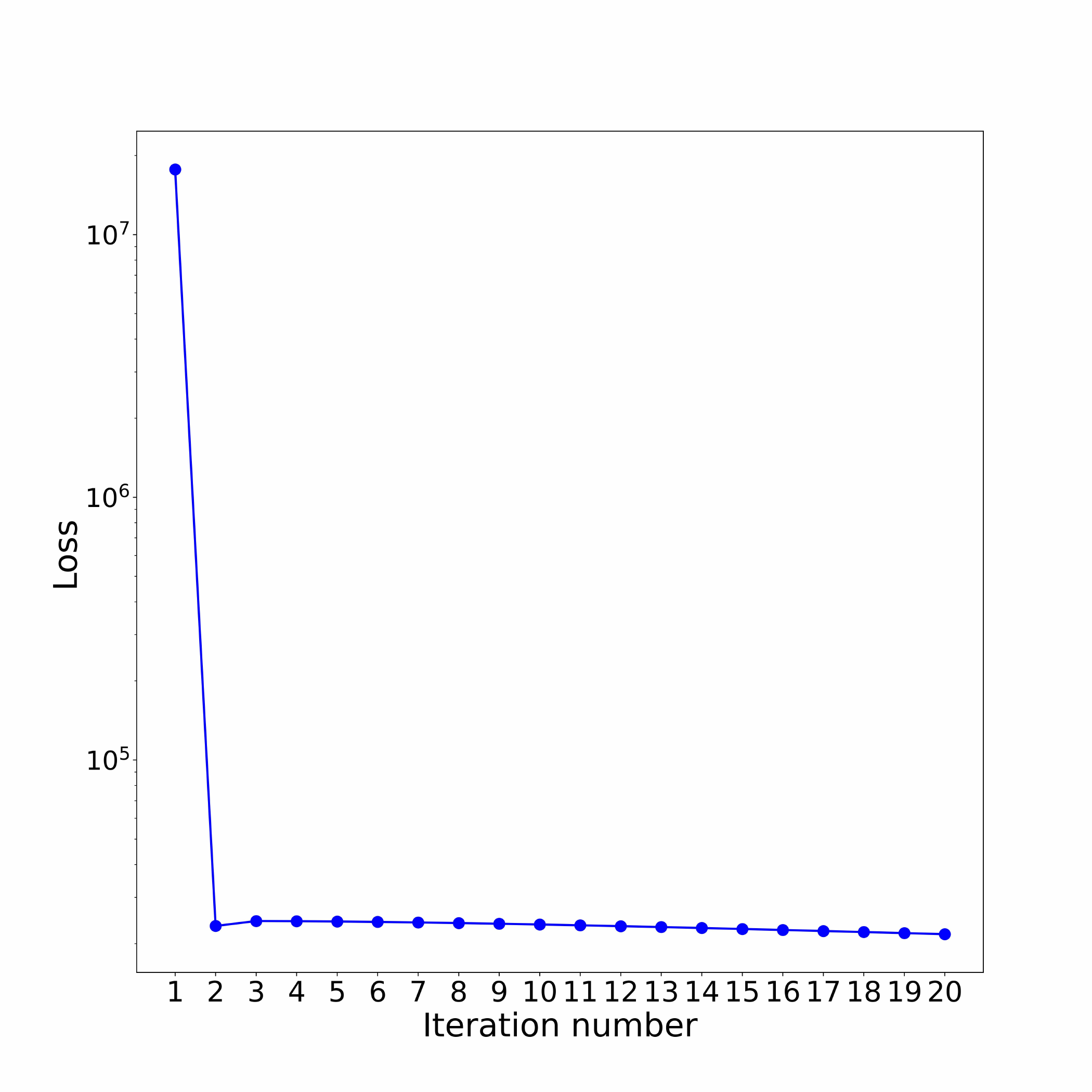}
     }
     \centering
     \subfigure[loss of CDC on Pubmed]{
     \includegraphics[width=0.85\linewidth]{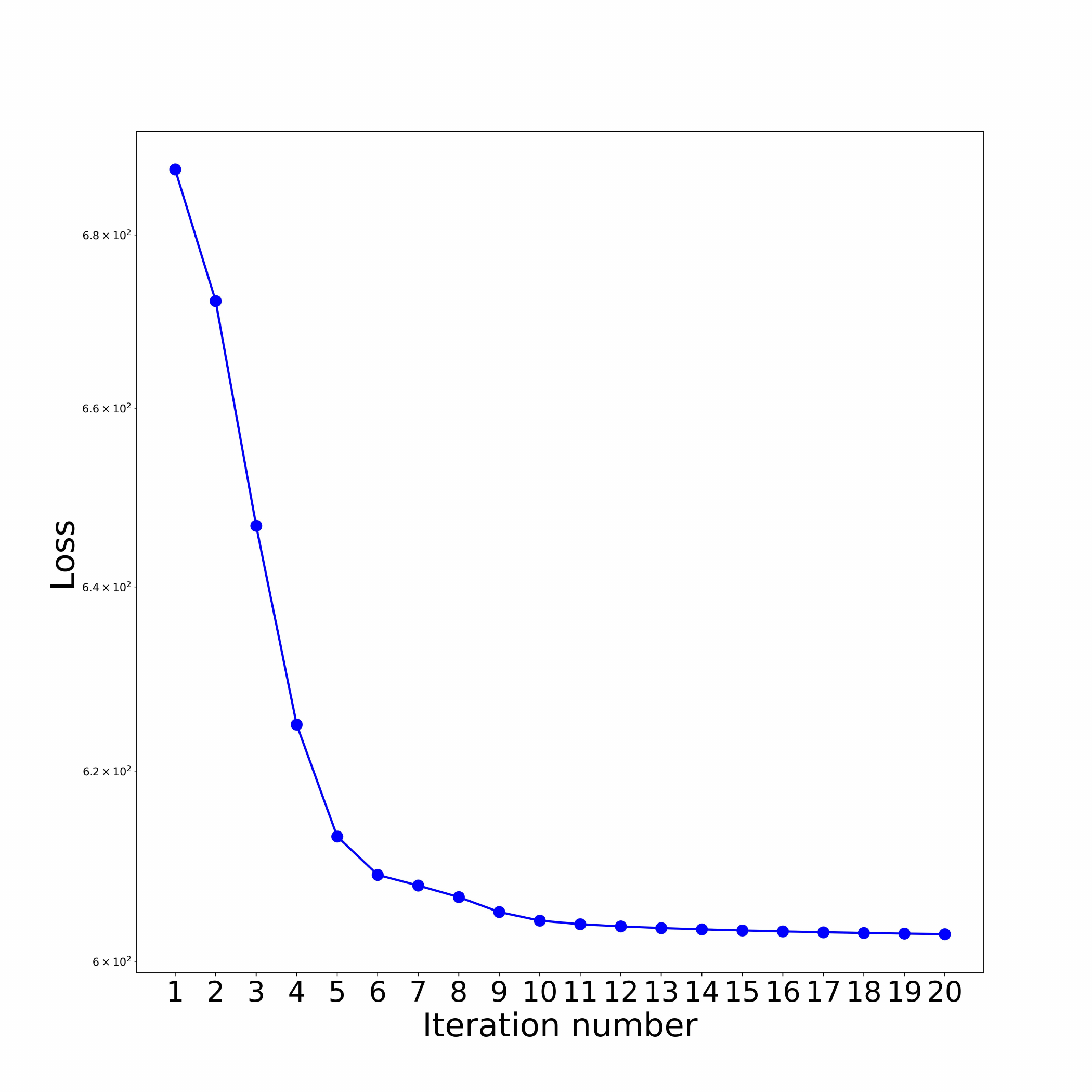}
     }
     \centering
     \subfigure[loss of CDC on YTF-31]{
     \includegraphics[width=0.85\linewidth]{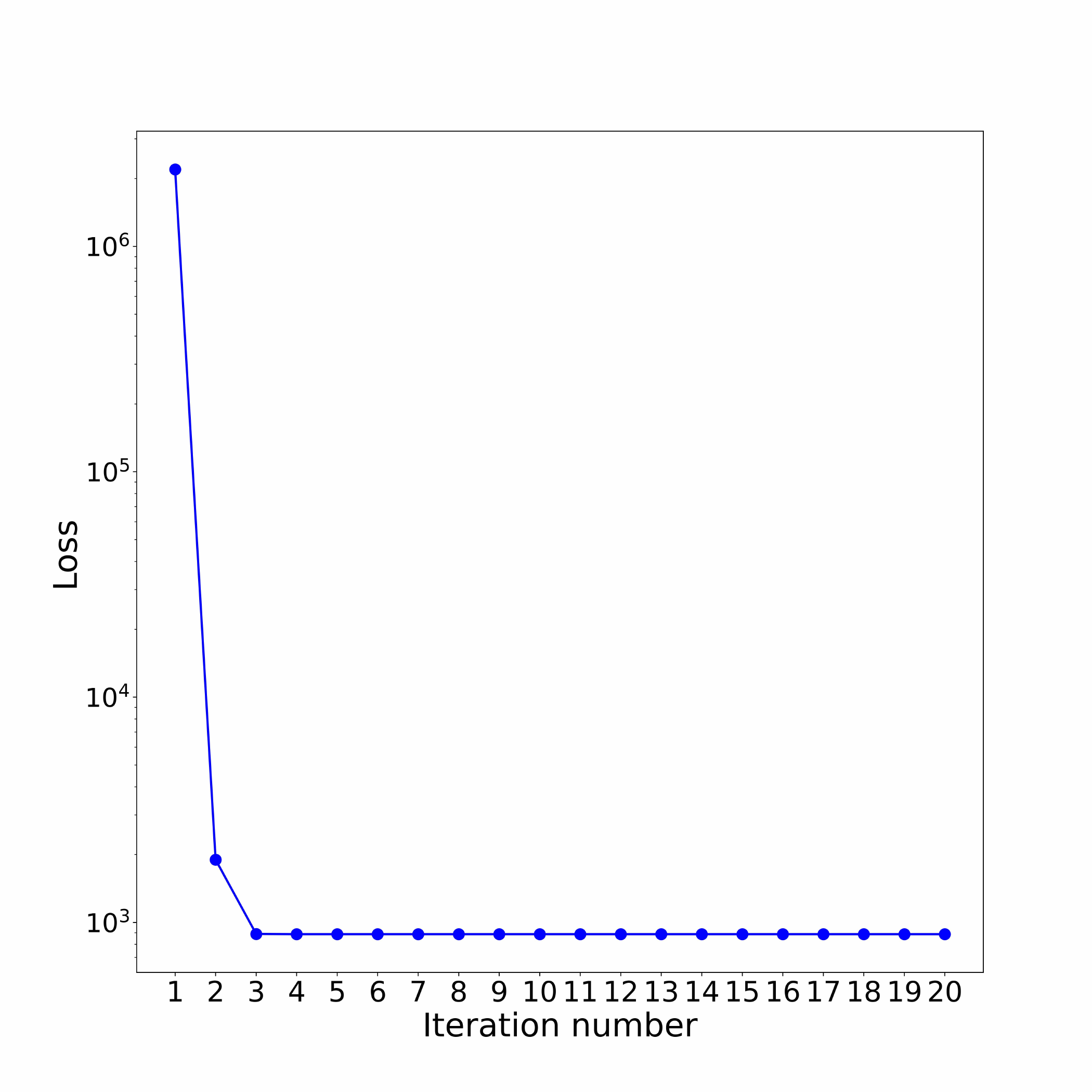}
     }
     \caption{The objective function value of CDC.}
     \label{lossv}
 \end{figure}
% Table generated by Excel2LaTeX from sheet 'Sheet1'

% There are two trade-off parameters, $\alpha$ and $\beta$, in our model. As shown in Fig \ref{PA1}, though CDC works well for a wide range of $\alpha$ and $\beta$, fine-tuning does enhance its performance. 

% In Fig \ref{PA2}, we show the influence of anchor number on clustering performance. We can observe that CDC achieves the best results with a few anchors, which in turn helps reduce the computation cost. Furthermore, too many anchors could deteriorate the performance since some noisy anchors that are not representative could be introduced.
%These show our anchor generation strategy is of effectiveness and of efficiency.
\section{Parameter Analysis} \label{PA}

There are two trade-off parameters, $\alpha$ and $\beta$, in our model. As shown in Figure \ref{PA1}, although CDC performs well across a wide range of $\alpha$ and $\beta$, fine-tuning further enhances its performance. $\beta$ has less impact than $\alpha$, indicating that the similarity-preserving regularizer is more important. The proposed CDC has linear complexity, so fine-tuning procedures require minimal time.

%These show our anchor generation strategy is of effectiveness and of efficiency.

\section{Conclusion}
In this paper, we propose a simple framework for clustering complex data that is readily applicable to graph and non-graph, as well as 
 multi-view and single-view data. The developed method has linear complexity and strong theoretical properties. Through graph filtering, we integrate deep structural information and learn representations that enhance cluster-ability. In particular, a similarity-preserving regularizer is designed to adaptively generate high-quality anchors, which alleviates the burden and randomness of anchor selection. CDC demonstrates its effectiveness and efficiency with impressive results on 14 complex datasets. In particular, it even exceeds the performance of many complex GNN-based methods. In light of the simplicity of the proposed framework and its effectiveness on various types of data, this work could have a broad impact on the clustering community and a high potential for deployment in real applications. One potential limitation of CDC is that it might not handle high-dimensional data efficiently, as anchor generation has cubic complexity with respect to sample dimension.

%\newpage

%We could integrate dimensionality reduction into CDC in future work.

% Acknowledgements should only appear in the accepted version.
% \section*{Acknowledgements}

% \textbf{Do not} include acknowledgements in the initial version of
% the paper submitted for blind review.

% If a paper is accepted, the final camera-ready version can (and
% probably should) include acknowledgements. In this case, please
% place such acknowledgements in an unnumbered section at the
% end of the paper. Typically, this will include thanks to reviewers
% who gave useful comments, to colleagues who contributed to the ideas,
% and to funding agencies and corporate sponsors that provided financial
% support.

% In the unusual situation where you want a paper to appear in the
% references without citing it in the main text, use \nocite
\nocite{langley00}

\bibliography{example_paper}
\bibliographystyle{IEEEtran}

%%%%%%%%%%%%%%%%%%%%%%%%%%%%%%%%%%%%%%%%%%%%%%%%%%%%%%%%%%%%%%%%%%%%%%%%%%%%%%%
%%%%%%%%%%%%%%%%%%%%%%%%%%%%%%%%%%%%%%%%%%%%%%%%%%%%%%%%%%%%%%%%%%%%%%%%%%%%%%%
% APPENDIX
%%%%%%%%%%%%%%%%%%%%%%%%%%%%%%%%%%%%%%%%%%%%%%%%%%%%%%%%%%%%%%%%%%%%%%%%%%%%%%%
%%%%%%%%%%%%%%%%%%%%%%%%%%%%%%%%%%%%%%%%%%%%%%%%%%%%%%%%%%%%%%%%%%%%%%%%%%%%%%%
%\newpage
%\appendix
%\onecolumn
%\section{You \emph{can} have an appendix here.}

%You can have as much text here as you want. The main body must be at most $8$ pages long.
%For the final version, one more page can be added.
%If you want, you can use an appendix like this one, even using the one-column format.
%%%%%%%%%%%%%%%%%%%%%%%%%%%%%%%%%%%%%%%%%%%%%%%%%%%%%%%%%%%%%%%%%%%%%%%%%%%%%%%
%%%%%%%%%%%%%%%%%%%%%%%%%%%%%%%%%%%%%%%%%%%%%%%%%%%%%%%%%%%%%%%%%%%%%%%%%%%%%%%

\begin{IEEEbiography}[{\includegraphics[width=1in,height=1.25in,clip,keepaspectratio]{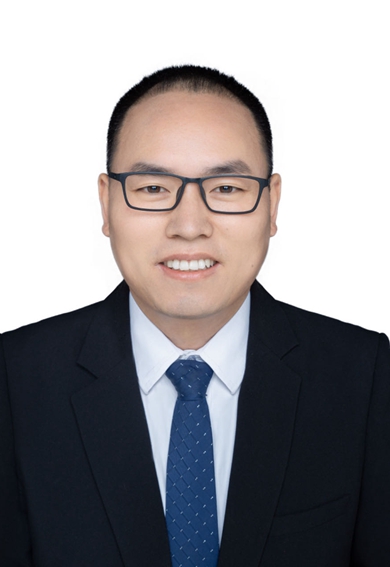}}]{Zhao Kang}
(Member, IEEE) received the Ph.D. degree in computer science from Southern Illinois University Carbondale, Carbondale, IL, USA, in 2017.
He is currently an Associate Professor at the School of Computer Science and Engineering, University of Electronic Science and Technology of
China, Chengdu, China. He has published over 100 research papers in top-tier conferences and journals, including ICML, NeurIPS, AAAI, IJCAI,
ICDE, CVPR, SIGKDD, ECCV, IEEE Transactions on Cybernetics, IEEE Transactions on Image Processing, IEEE Transactions on Knowledge and Data
Engineering, and IEEE Transactions on Neural Networks and Learning Systems. His research interests are machine learning and natural language processing. Dr. Kang has been an AC/SPC/PC Member or a Reviewer for a number of top conferences, such as NeurIPS, ICLR, AAAI, IJCAI, CVPR,
ICCV, ICML, and ECCV. He regularly serves as a reviewer for the Journal of Machine Learning Research, IEEE TRANSACTIONS ON PATTERN
ANALYSIS AND MACHINE INTELLIGENCE, IEEE TRANSACTIONS ON NEURAL NETWORKS AND LEARNING SYSTEMS, IEEE TRANSACTIONS
ON CYBERNETICS, IEEE TRANSACTIONS ON KNOWLEDGE AND DATA ENGINEERING, and IEEE TRANSACTIONS ON MULTIMEDIA.
\end{IEEEbiography}

\begin{IEEEbiography}[{\includegraphics[width=1in,height=1.25in,clip,keepaspectratio]{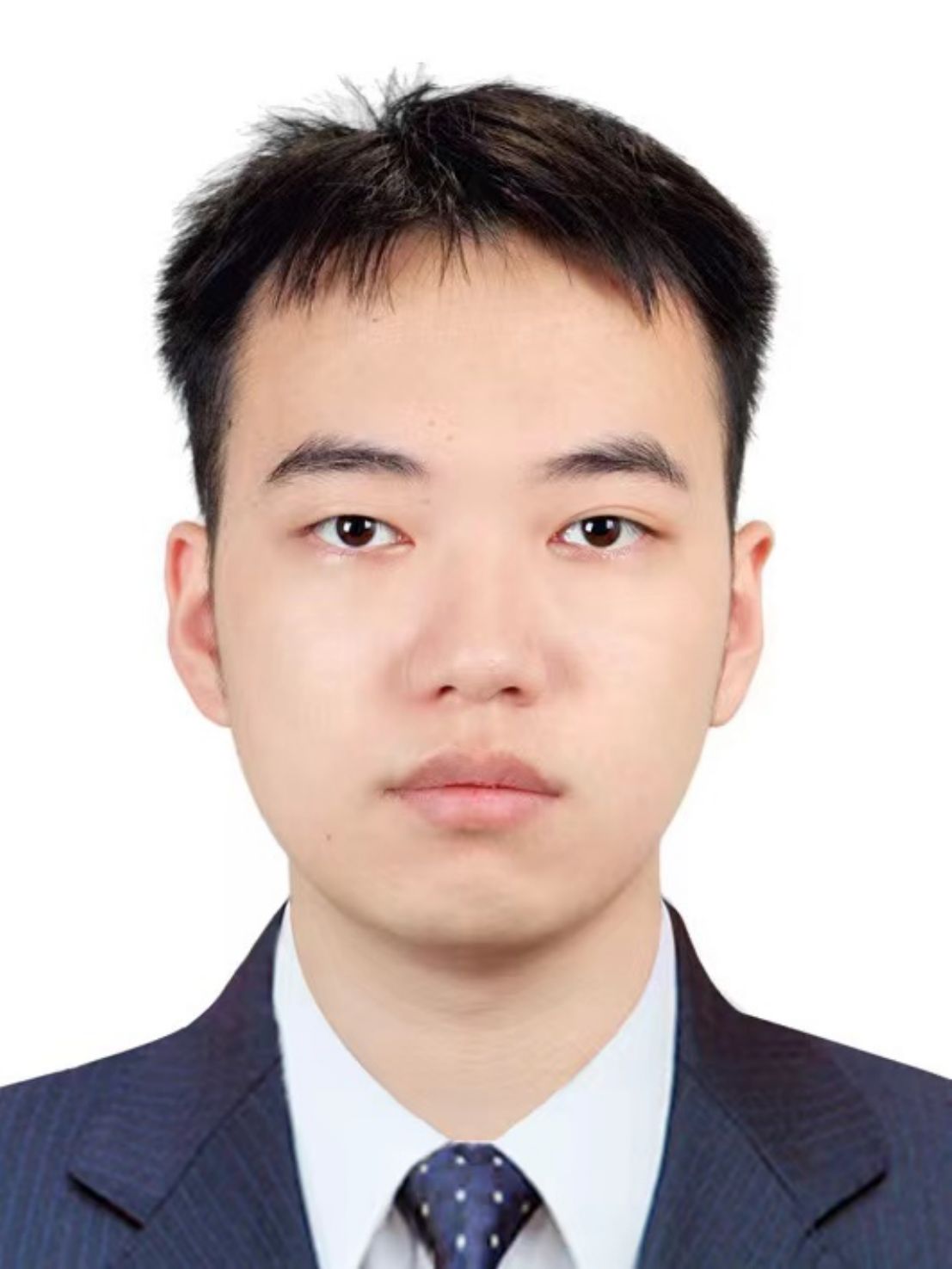}}]{Xuanting Xie}
received the B.Sc. degree in Software Engineering in
2021 and is currently studying for a Ph.D. degree with School of Computer
Science and Engineering, the University of Electronic Science and Technology
of China, Chengdu, China. His main research interests include graph learning and clustering.
\end{IEEEbiography}
\begin{IEEEbiography}[{\includegraphics[width=1in,height=1.25in,clip,keepaspectratio]{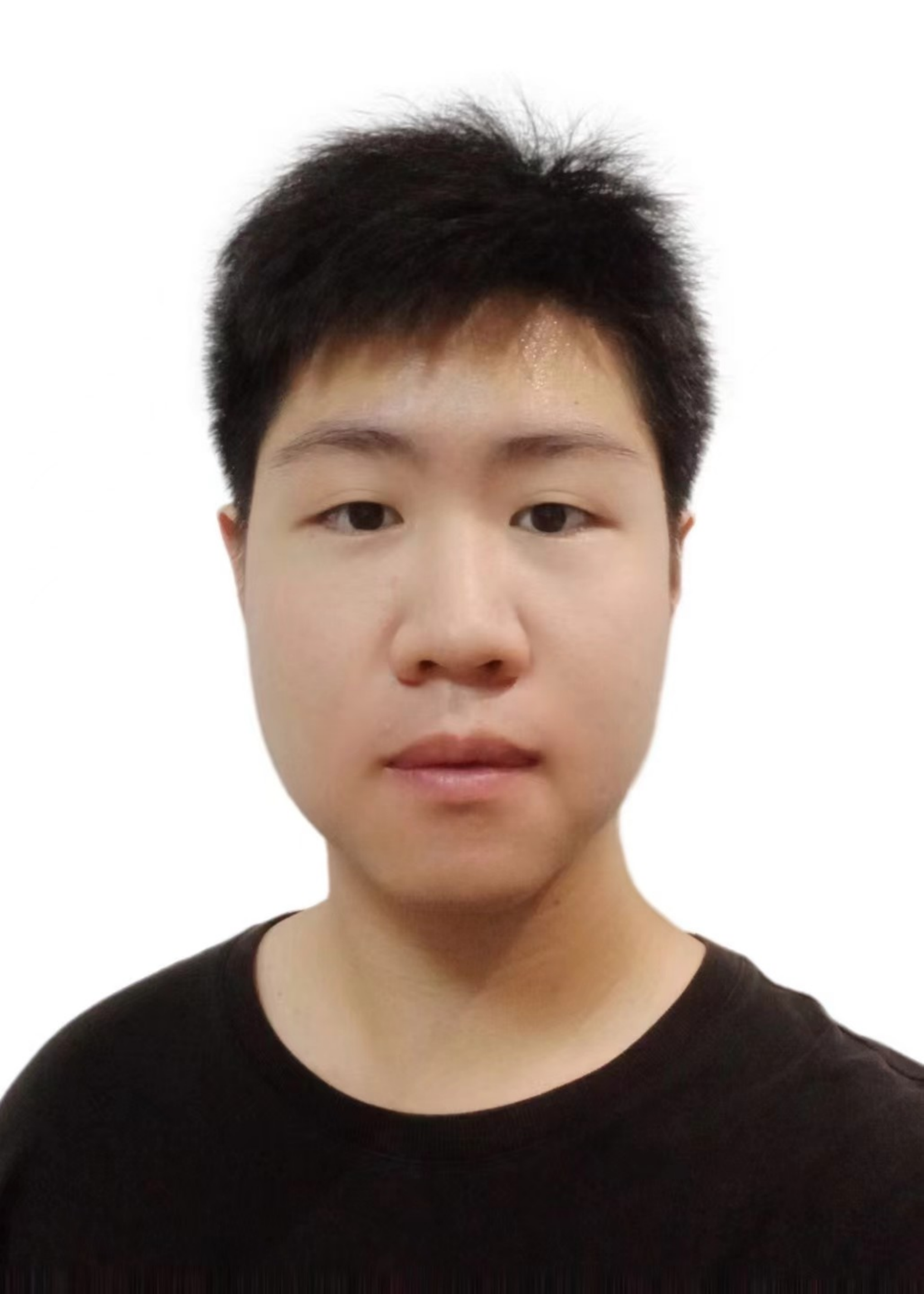}}]{Bingheng Li}
is currently an undergraduate student at the University of Electronic Science and Technology of China. His research focuses on graph clustering and graph data mining.
\end{IEEEbiography}
\begin{IEEEbiography}[{\includegraphics[width=1in,height=1.25in,clip,keepaspectratio]{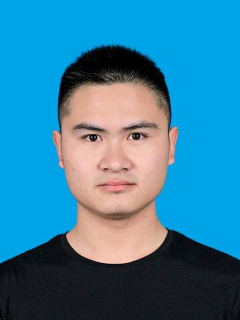}}]{Erlin Pan}
received the B.Sc. degree in computer science and technology in
2021 and is currently studying for a M.S degree with School of Computer
Science and Engineering, the University of Electronic Science and Technology
of China, Chengdu, China. His main research interests include graph learning,
multi-view learning, and clustering.
\end{IEEEbiography}

\vfill
\end{document}